%\UseRawInputEncoding
\documentclass[journal]{IEEEtran}
\usepackage{amssymb}
\usepackage{amsmath}
\usepackage{dcolumn}% Align table columns on decimal point
\usepackage{bm}% bold math
\usepackage{color}
\usepackage{graphicx}
\usepackage{subfigure} 
\usepackage{tabularx}
\usepackage{array}
\usepackage{amsthm}
\usepackage{cite}
\usepackage{hyperref}
\usepackage{enumerate}

\newtheorem{theorem}{Theorem}
\newtheorem{corollary}{Corollary}
\newtheorem{lemma}{Lemma}
\newtheorem{remark}{Remark}
\newtheorem{definition}{Definition}
\newtheorem{assumption}{Assumption}
\newtheorem{proposition}{Proposition}
\newtheorem{problem}{Problem}

\renewcommand{\t}{^{\mbox{\tiny\sf T}}}
\newcommand{\bremark}{\begin{remark}
\begin{rm}}
\newcommand{\eremark}{ \end{rm}\hfill \rule{1mm}{2mm}
\end{remark} }
\newcommand{\btheorem}{\begin{theorem} \begin{it}}
\newcommand{\etheorem}{\end{it} \hfill \rule{1mm}{2mm}
\end{theorem} }
\newcommand{\blemma}{\begin{lemma} \begin{it} }
\newcommand{\elemma}{ \end{it} \hfill\rule{1mm}{2mm}
\end{lemma} }
\newcommand{\bcorollary}{\begin{corollary} \begin{it} }
\newcommand{\ecorollary}{ \end{it} \hfill\rule{1mm}{2mm}
\end{corollary} }
\newcommand{\bdefinition}{\begin{definition} }
\newcommand{\edefinition}{ \hfill\rule{1mm}{2mm}
\end{definition} }
\newcommand{\bproposition}{\begin{proposition} }
\newcommand{\eproposition}{\hfill \rule{1mm}{2mm}
\end{proposition} }
\newcommand{\bexample}{\begin{example} \begin{rm}}
\newcommand{\eexample}{ \end{rm} \hfill\rule{1mm}{2mm}
\end{example} }

\newcommand{\bassumption}{\begin{assumption} }
\newcommand{\eassumption}{\hfill \rule{1mm}{2mm}
\end{assumption} }

\newcommand{\balgorithm}{\medskip\begin{algorithm} \rm}
\newcommand{\ealgorithm}{ \hfill \rule{1mm}{2mm}\medskip
\end{algorithm} }

\newcommand{\basm}{\begin{assumption} \begin{rm} }
\newcommand{\easm}{ \end{rm} \hfill\rule{1mm}{2mm}
\end{assumption} }

\begin{document}

\title{Coordinated Navigation Control of Cross-Domain Unmanned Systems via Guiding Vector Fields }

\author{
\IEEEauthorblockN{Bin-Bin Hu, Hai-Tao Zhang\IEEEauthorrefmark{1}, Bin Liu, Jianing Ding, Yifan Xu, 
Chuanshang Luo
and 
Haosen Cao
}
\thanks{This work was supported by the National key R\&D Program of China under Grant 2022ZD0119601, in part by the National Natural Science
Foundation of China under Grants 62225306, U2141235, 52188102 and 62003145, in part by the Guangdong Basic and Applied Research Foundation under Grant 2022B1515120069 ({\it Corresponding author: H.-T. Zhang}).
}
\thanks{Bin-Bin Hu, Hai-Tao Zhang, Jianing Ding, Yifan Xu, Chuanshang Luo, and Haosen Cao are with the School of Artificial Intelligence and Automation, the Engineering Research Center of Autonomous Intelligent Unmanned Systems, the Key Laboratory of Image Processing and Intelligent Control, and the State Key Lab of Digital Manufacturing Equipment and Technology,
Huazhong University of Science and Technology, Wuhan 430074, P.R.~China
(e-mails: hbb@hust.edu.cn, zht@mail.hust.edu.cn, djn@hust.edu.cn, m202173 254@hust.edu.cn, csluo@hust.edu.cn, chs@hust.edu.cn).
}
\thanks{Bin Liu is with Dongguan University of Technology, Dongguan, 523106, P.R.~China  
(e-mail: liubin@dgut.edu.cn).
}

}

\maketitle

\begin{abstract}
This paper proposes a distributed guiding-vector-field (DGVF) controller for cross-domain unmanned systems (CDUSs) consisting of heterogeneous unmanned aerial vehicles (UAVs) and unmanned surface vehicles (USVs), to achieve coordinated navigation whereas maneuvering along their prescribed paths. In particular, the DGVF controller provides a hierarchical architecture of an upper-level heterogeneous guidance velocity controller and a lower-level signal tracking regulator. Therein, the upper-level controller is to govern multiple heterogeneous USVs and UAVs to approach and maneuver along the prescribed paths and coordinate the formation simultaneously, whereas the low-level regulator is to track the corresponding desired guidance signals provided by the upper-level module. Significantly, the heterogeneous coordination among neighboring UAVs and USVs is achieved merely by the lightweight communication of a scalar (i.e., the additional virtual coordinate), which substantially decreases the communication and computational costs. Sufficient conditions assuring asymptotical convergence of the closed-loop system are derived in presence of the exponentially vanishing tracking errors. Finally, real-lake experiments are conducted on a self-established cross-domain heterogeneous platform consisting of three M-100 UAVs, two HUSTER-16 USVs, a HUSTER-12C USV, and a WiFi 5G wireless communication station to verify the effectiveness of the present DGVF controller.  
\end{abstract}
 
\begin{IEEEkeywords}
Cross-domain coordination, path navigation, unmanned aerial vehicles (UAVs), unmanned surface vehicles (USVs), guiding vector fields
% network analysis and control, agents and autonomous systems
%cross-domain coordination, navigation, unmanned aerial vehicles (UAVs), unmanned surface vehicles (USVs), guiding vector fields
\end{IEEEkeywords}

\IEEEpeerreviewmaketitle

\section{Introduction}
Coordinated navigation of unmanned systems, including various kinds of vehicles, such as unmanned aerial vehicles (UAVs), unmanned surface vehicles (USVs), unmanned ground vehicles (UGVs), and unmanned underwater vehicles (UUVs) \cite{yao2021singularity}, plays an increasingly essential role in robotics community due to its superiority in large operational range, high efficiency, strong robustness, and superb flexibility. Such virtues lay a solid foundation for various applications, such as intelligent transportation, convoying \& rescue, environmental exploration, and surveillance \& patrolling~\cite{hu2023cooperative, hu2021bearing}. 

Notably, initial efforts were mainly devoted to the coordination of single-domain unmanned systems.  As a typical marine intelligent vehicle, USV plays an important role in conducting abundant marine operations, where the coordinated navigation of USVs has been intensively explored in recent years. For instance, for coordinated path-following tasks, a line-of-sight (LOS) guidance method with a back-stepping technique was proposed in \cite{liu2018cooperative} to follow closed paths. Other relevant methods concerning neurodynamic optimization \cite{peng2018path} and reinforcement learning \cite{zhao2021usv} were developed to cope with state constraints and unknown braking disturbances. For more complex coordinated target surrounding tasks, an output-regulation-based controller was developed in \cite{liu2019collective} for static targets, which was afterward extended to motional targets with variational velocities \cite{hu2021distributed2,hu2020multiple}. For other specific marine operations like scanning-chain navigation, a path-following algorithm with 2D B-spline curves was designed in \cite{liu2020scanning}. A recent work proposed a guiding-vector-field controller in \cite{hu2023spontaneous} for USVs to form a {\it spontaneous-ordering} platoon whereas performing path navigation. However, most of the existing studies \cite{liu2018cooperative,peng2018path,zhao2021usv,liu2019collective,hu2021distributed2,hu2020multiple,liu2020scanning} still focused on single-domain coordination, namely, a 2D horizontal water surface, which inevitably hinders their further applications due to the limited detectable ranges and operational regions.

Fortunately, UAVs endow  3D coverage and high flexibility in sky detection, where the associated coordinated navigation has also been extensively explored in the literature these years. 
A decentralized navigation scheme was proposed in \cite{huang2021decentralized} for UAV swarms to achieve aerial surveillance. A minimum-control (MINCO) optimization framework was developed in~\cite{zhou2022swarm} for micro UAV swarms in the wild. Thereafter, it was extended to a distributed optimization framework with the assistance of differentiable graphs to yield formations in dense environments \cite{quan2022formation}. A novel payload-based controller was proposed in \cite{salinas2023unified} to cooperatively transport a slung load in forward flight. 
To address more complex aerodynamic interaction forces in swarms, a deep learning neural networks (DNNs)-based method entitled Neural-Swarm2 was proposed in \cite{shi2021neural} to coordinate heterogeneous multirotors in close proximity. Unfortunately, most of these existing UAV-coordination works \cite{zhou2022swarm,quan2022formation, shi2021neural,salinas2023unified} rely on high communication and computational costs because of the online optimization and learning process, which may not be suitable to the lightweight on-board controller in practice.  A more practical method is thus motivated by recent work \cite{yao2022guiding} only utilizing guiding vector fields (GVFs) with an additional virtual coordinate, where the motion coordination is achieved by communicating with lightweight virtual coordinates.

Despite the coordination of USVs or UAVs performing well in some operations, there still exists some inherent weaknesses. For example, UAVs only carry compact batteries, which thus cannot sustain long-term execution. Note that the detectable range of USVs is constrained by water surfaces. However, if the USV fleets are equipped with onboard UAV swarms, the detection region of the cross-domain group is substantially increased from 2D to 3D, which can unlock much more advanced applications, such as maritime-air reconnaissance, detection, rescue, anti-smuggling, etc. Promisingly, these years have witnessed some works exploring UAV-USV coordination in the literature, especially the cooperative landing task of a UAV on a motional USV~\cite{shao2019novel}. In this pursuit, a sliding-mode control scheme was developed in~\cite{lee2018sliding} for motional UAV-USV landing. Later, it was improved to a fixed-time landing approach with external disturbances~\cite{xia2020adaptive}. For more complex visual-integrated autonomous landing, the feasibility of such a visual navigation system was verified by both numerical simulations~\cite{meng2019vision} and real-lake experiments~\cite{zhang2021visual}. However,  due to the arduousness of cross-domain multiple UAV-USV cooperation, most of the existing landing works~\cite{shao2019novel,lee2018sliding,xia2020adaptive,meng2019vision,zhang2021visual} still focused on the single-UAV single-USV situation.  
As representative works of the few investigation on much more challenging multiple UAV-USV coordination, the initial efforts have been devoted to the fundamental theory of the UAV-USV fleets in \cite{su2018controllability,long2018group}, where the USVs and UAVs were regarded as a two-time-scale system, namely, a slow-UAV subsystem and a fast-USV subsystem. Then, a distributed-model-predictive-control (DMPC) scheme was proposed in~\cite{huang2020formation} to produce a coordinated formation of multiple UAV-USV fleets. A distributed fixed-time control protocol was proposed in \cite{liu2023distributed}  to stabilize heterogeneous UAV-USV formations. A flexible leader-follower formation controller was developed in~\cite{simonsen2020application} for autonomous multiple UAV-USV navigation. Thereafter, for more complicated situations, a cooperative 3D-mapping guidance signal with adaptive fuzzy control was designed in \cite{li2022novel} to address a UAV-USV path-following problem in presence of both system uncertainties and external disturbances. % A robust adaptive neural cooperative control algorithm was developed in~\cite{li2022robust} to  attenuate the influences of the system uncertainties. 

 Till date, most of the aforementioned multiple UAV-USV coordination works~\cite{su2018controllability,long2018group,huang2020formation,liu2023distributed,simonsen2020application,li2022novel} are only verified by numerical simulations, which have not touched real marine experiments due to the practical challenges in the cross-domain coordination, such as the wind/wave/tide disturbances, limited sensor regions, underactuation and saturated actuators. Additionally, the aforementioned works \cite{su2018controllability,long2018group,huang2020formation,liu2023distributed,simonsen2020application,li2022novel} coordinate the UAV-USV formation via interacting complex Euclidean distance among vehicles, which will inevitably increase communication, sensor and computational costs, and thereby pose new challenges in real applications. Therefore, it becomes an urgent yet challenging mission to develop a more efficient and practical scheme to achieve realistic multiple UAV-USV coordinated navigation.

To this end, inspired by the GVFs with an additional virtual coordinate in \cite{yao2022guiding}, we propose a distributed GVF (in abbr.~DGVF) controller for heterogeneous CDUSs to achieve coordinated navigation whereas maneuvering along the prescribed paths. Particularly, the DGVF controller is established as a hierarchical architecture to decouple heterogeneous CDUSs, which consists of an upper-level heterogeneous guidance velocity controller and a lower-level signal tracking regulator. More precisely, the upper-level  controller is to govern USVs and UAVs to converge to and maneuver along their prescribed paths and coordinate their formation simultaneously. By contrast, the low-level regulator is to track the corresponding desired guidance signals. In this way, we establish a cross-domain heterogeneous platform consisting of three M-100 UAVs, two HUSTER-16 USVs, a HUSTER-12C USV, and a WiFi 5G wireless communication station on Songshan Lake, Dongguan City, Guangdong Province, China. Finally, real-lake experiments using such a cross-domain heterogeneous platform  to verify the effectiveness of the proposed method. In summary, the main contribution of this paper is three-fold.

\begin{enumerate}

\item We develop a DGVF controller for the heterogeneous CDUS to converge to and maneuver along their prescribed paths while coordinating their formations.

\item  We design a lightweight communication-based protocol (i.e., exchanging their virtual coordinates), which substantially reduces real-time communication, sensor, and computational costs for complex heterogeneous coordination.

\item We establish a cross-domain heterogeneous USV-UAV formation coordination platform with three M-100 UAVs, two HUSTER-16 USVs, a HUSTER-12C USV, and a WiFi 5G wireless communication station. 

\end{enumerate}

The rest of the paper is organized as follows. Section~II introduces some preliminaries and formulates the problem. Section III proposes the UAV-USV formation coordination controller, and elaborates on the main technical results.
Afterwards, real lake-based experiments and complex simulations  are conducted in Section IV to verify the effectiveness of the proposed controller. Finally, the conclusion is drawn in Section~V.

{\it Notations:} The real numbers and positive real numbers are denoted by $\mathbb{R},\mathbb{R}^+$, respectively. The $n$-dimensional Euclidean space is denoted by $\mathbb{R}^n$. The integer is represented by $\mathbb{Z}$, where the set of integer $\{m\in \mathbb{Z}~|~i\leq m\leq j\}$ is represented by $\mathbb{Z}_i^j$. The Euclidean norm of a vector $a$ is $\|a\|$. The transpose of a matrix $A$ is represented by $A\t$. The $n$-dimensional identity matrix is represented by $I_n$. The $n$-dimensional column vector consisting of all 1's is denoted by $\mathbf{1}_n$.

\section{Preliminaries}
\subsection{Path-Following Vector Fields}
Suppose a prescribed path $\mathcal P^{phy}$ living in the $n$-dimensional Euclidean space is defined by the zero-level set
of implicit functions,
\begin{align}
\label{implicit_path_definition}
\mathcal P^{phy}:=\{ \sigma\in\mathbb{R}^n~|~\phi_i(\sigma)=0, i=1,\dots, n-1\},
\end{align}
where $\sigma\in\mathbb{R}^n$ are the coordinates and $\phi_i(\cdot): \mathbb{R}^n\rightarrow\mathbb{R}$ is a twice continuously differentiable function, i.e., $\phi_i(\cdot)\in C^2$.

The prescribed path $\mathcal P^{phy}$ in \eqref{implicit_path_definition} is a topological description of 1D connected differential manifold~\cite{freedman2014topology}, where the functions $\phi_i(q_0), i=1, \dots, n-1,$ provide a simple method to measure the distance implicitly between a point $q_0\in\mathbb{R}^n$ and the prescribed path  $\mathcal P^{phy}$ instead of the sophisticated calculation of $\mbox{dist}(q_0,\mathcal P^{phy}):=\mbox{inf}~\{\|p-q_0\|~|~p\in\mathcal P^{phy}\}$. For instance, a circular path in the 2D plane is described by $\phi(\sigma_1, \sigma_2)=\sigma_1^2+\sigma_2^2-r^2=0$ with $\sigma_1, \sigma_2\in\mathbb{R}$ being the coordinates and $r\in\mathbb{R}^+$ the radius. Then, for arbitrary point $q_0\in\mathbb{R}^2$, $\phi_i(q_0)$ can be utilized as the signed circular path-following errors implicitly, where $|\phi_i(q_0)|$ decreases as the point $q_0$ goes near the circular path, and $|\phi_i(q_0)|=0$ if $q_0$ is on the path. However, there still exist some pathological situations (i.e., $\phi_i(q_0)=0$ does not necessarily mean $\mbox{dist}(q_0,\mathcal P)=0$) if the functions $\phi_i(\cdot)$ are not properly selected, which are excluded by the following assumption.

\begin{assumption}
\cite{yao2021singularity}
\label{assp_error} For any given $\kappa>0$, an arbitrary point $q_0\in\mathbb{R}^n$ and a desirable physical path $\mathcal{P}^{phy}$, we assume that
$$ \inf\big\{ |\phi_i(q_0)|: \mbox{dist}(q_0,\mathcal P^{phy})\geq \kappa\big\}>0.$$
\end{assumption}
Assumption~\ref{assp_error} selects a valid function $\phi(\cdot)$ to guarantee $ \lim_{t\rightarrow\infty} |\phi_i(q_0(t))|=0\Rightarrow\lim_{t\rightarrow\infty}\mbox{dist}(q_0(t),\mathcal P^{phy})=0$, where $q_0(t)$ is a trajectory in the $n$-dimensional Euclidean space.  Note that Assumption~\ref{assp_error} can be easily satisfied with some polynomial or trigonometric functions (see e.g., \cite{yao2021singularity}).

According to the description of $\mathcal P^{phy}$ in \eqref{implicit_path_definition} and Assumption~\ref{assp_error}, the guiding vector field (GVF) $\chi^{phy}$ for the path-following problem is stated below (see, e.g., \cite{yao2021singularity}),
\begin{align}
\label{eq_GVF}
\chi^{phy}=&\times (\nabla\phi_1(q), \cdots, \nabla\phi_{n-1}(q))\nonumber\\
&-\sum_{j=1}^{n-1}k_{j}\phi_j(q)\nabla\phi_j(q),
\end{align}
where the symbol $\times$ represents the cross product, $k_j$ the control gain, $q\in\mathbb{R}^n$ the position of the vehicle, and $\nabla\phi_j(\zeta): \mathbb{R}^{n}\rightarrow\mathbb{R}^{n}$ the gradient of $\phi_j$ w.r.t. $\zeta$.
The former term $\times (\nabla\phi_1(q), \cdots, \nabla\phi_{n-1}(q))$ in Eq.~\eqref{eq_GVF}  perpendicular to the gradients $\nabla\phi_j(\xi), j\in \mathbb{Z}_1^{n-1}$, provides a propagation velocity for the vehicle to maneuver along the prescribed path $\mathcal P^{phy}$, whereas the latter term $-\sum_{j=1}^{n-1}k_{j}\phi_j(q)\nabla\phi_j(q)$ is the gradient to guide the vehicle to converge to the prescribed path $\mathcal P^{phy}$. With the aforementioned designed $\chi^{phy}$ in \eqref{eq_GVF}, the vehicle will finally approach and move along the prescribed path.

However, there may exist still some singular points (i.e., $\chi^{phy}=\mathbf{0}_n$) that undermine the global convergence of the GVF $\chi^{phy}$ in Eq.~\eqref{eq_GVF}. 
To eliminate such undesirable singular points and guarantee a global convergence to the prescribed path~$\mathcal P^{phy}$, a higher-dimensional GVF is introduced with an additional virtual coordinate $\omega\in \mathbb{R}$.

\begin{definition}
\label{definition_GVF}
(Higher-dimensional GVF) \cite{yao2021singularity} Suppose a prescribed path $\mathcal P^{phy}$ is parameterized by 
\begin{align}
\label{HD_path_phy}
\mathcal P^{phy}:=\big\{[\sigma_1, \cdots, \sigma_n]\t\in\mathbb{R}^n~|~\sigma_j=f_j(\omega), j\in\mathbb{Z}_1^n\big\}
\end{align}
with the $j$-th coordinate $\sigma_j\in\mathbb{R}$, the virtual coordinate parameter $\omega\in\mathbb{R}$, and the twice continuously differentiable function $ f_j \in C^2$. 
Define $\xi=[\sigma_1,\dots, \sigma_n, \omega]\t\in \mathbb{R}^{n+1}$ as an augmented coordinate vector, and one can establish a corresponding higher-dimensional path 
\begin{align}
\label{HD_path_hgh}
\mathcal P^{hgh}:=\big\{\xi\in\mathbb{R}^{n+1}~|~ \phi_j(\xi)=0, j\in\mathbb{Z}_1^n\big\}
\end{align}
with the implicit functions $\phi_{j}(\xi):=\sigma_j-f_j(\omega)\in\mathbb{R}$. 
\begin{figure}[!htb]
\centering
\includegraphics[width=5cm]{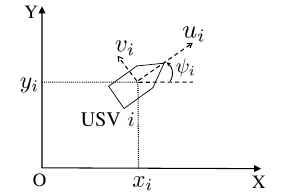}
\caption{ Illustration of the USV dynamics.}
\label{USV_dynamic}
\end{figure}
Let the position of vehicle be $q\in \mathbb{R}^n$ and defining $\zeta=[q\t, \omega]\in\mathbb{R}^{n+1}$ as the augmented position vector, we then substitute $\zeta\in\mathbb{R}^{n+1}$ into $\xi\in\mathbb{R}^{n+1}$ of Eq.~\eqref{HD_path_hgh} to yield a higher-dimensional GVF $\chi^{hgh}\in\mathbb{R}^{n+1}$ 
\begin{align}
\label{high_eq_GVF1}
\chi^{hgh}=&\times (\nabla\phi_1(\zeta), \cdots, \nabla\phi_n(\zeta))-\sum_{j=1}^nk_{j}\phi_j(\zeta)\nabla\phi_j(\zeta),\nonumber\\
                =&\begin{bmatrix}
        	              (-1)^n \partial{f}_{1}(\omega)-k_{1}\phi_{1}(\zeta) \\
	              \vdots\\
                      (-1)^n\partial{f}_{n}(\omega)-k_{n}\phi_{n}(\zeta)\\
                      (-1)^n+\sum\limits_{j=1}^nk_{j}\phi_{j}(\zeta)\partial{f}_{j}(\omega)\\
                   \end{bmatrix},  
\end{align}
where $\nabla\phi_j(\zeta):=[0,\dots,1,\dots,-\partial{f}_{j}(\omega) ]\t\in\mathbb{R}^{n+1}, j\in\mathbb{Z}_1^n$ is the gradient of $\phi_j(\zeta)$ w.r.t. the vector $\zeta$, $\partial{f}_{j}:={\partial f_{j}(\omega)}/{\partial\omega}$ denotes the partial derivative of $f_{j}(\omega)$ w.r.t. the virtual coordinate $\omega$, and $k_j$ is the control gain.  
\end{definition}

Definition~\ref{definition_GVF} illustrates the non-zero feature of $\chi^{hgh}$ (i.e., $\chi^{hgh}\neq\mathbf{0}_{n+1}$) due to the non-zero term $(-1)^n$ in the last row of Eq.~\eqref{high_eq_GVF1}, 
which hence eliminates the singular points of the original GVF $\chi^{phy}$ in \eqref{eq_GVF}. By projecting $\chi^{hgh}$ into the first $n$-dimensional Euclidean space, the vehicle will approach and then move along the prescribed path $\mathcal P^{phy}$ from any initial position.

\subsection{Cross-Domain Unmanned Systems}
We consider a CDUS consisting of $n$ USVs represented by ${\cal V}_1=\{1,2,\dots, n\}$ and $m$ UAVs represented by ${\cal V}_2=\{n+1, n+2,\dots, n+m\}$.
As shown in Fig.~\ref{USV_dynamic}, the kinematics of the USV $i, i \in{\cal V}_1$ in the Cartesian coordinates are described below \cite{hu2021bearing},
\begin{align}
\label{USV_kinetic}
 \dot{q}_{i,x} &=u_i\cos\psi_i-v_i\sin\psi_i, \nonumber\\
\dot{q}_{i,y} &=u_i\sin\psi_i+v_i\cos\psi_i, \nonumber\\
\dot{\psi}_i &=r_i,
\end{align}
where $q_i(t):=[q_{i, x}(t), q_{i, y}(t)]\t\in \mathbb{R}^2$ denotes the position, $\psi_i(t) \in \mathbb{R}$ the yaw angle, and $u_i(t), v_i(t), r_i(t) \in \mathbb{R}$ the surge, sway and yaw velocities of USV $i$ in the body coordinate, respectively.

The dynamics of USV $i, i \in{\cal V}_1$ are generally described by a practical model (see e.g., \cite{liu2019collective}) below
\begin{align}
\label{USV_kinematic}
\dot{u}_i &=\epsilon_1u_i+\epsilon_2v_i r_i+\epsilon_3\tau_{i,u}, \nonumber\\
\dot{r}_i &=\epsilon_4r_i+\epsilon_5\tau_{i,r}, \nonumber\\
\dot{v}_i &=\epsilon_6v_i+\epsilon_7u_i r_i,
\end{align}
where $\epsilon_1, \epsilon_2, \epsilon_3, \epsilon_4, \epsilon_5, \epsilon_6, \epsilon_7\in \mathbb{R}$ are the identified parameters via zig-zag experiments, and $\tau_{i,u}, \tau_{i,r}\in \mathbb{R}$ the actuator inputs of USV~$i, i\in\mathcal V_1$.

For the subgroup of the UAVs,  the kinematics of the UAV $j, j\in{\cal V}_2$ are generally approximately illustrated by a second-order integrator by ignoring the fast-time-scale inner-loop motor regulator \cite{dong2014time}, 
\begin{align}
\label{UAV_kinetic}
 \dot{q}_j &=p_j, \nonumber\\
 \dot{p}_j &=\tau_j, 
\end{align}
where $q_j=[q_{j, x}, q_{j, y}, q_{j, z}]\t\in\mathbb{R}^3, p_j=[p_{j, x}, p_{j, y}, p_{j, z}]\t\in\mathbb{R}^3, \tau_j=[\tau_{j, x}, \tau_{j, y}, \tau_{j, z}]\t\in\mathbb{R}^3$ are the position, velocity and input
acceleration of the UAV $j$ in 3D Euclidean space, respectively.

\begin{remark}
The second-order integrator in Eq.~\eqref{UAV_kinetic} plays a crucial role in demonstrating the kinetics of the UAVs, which captures the acceleration and hence is more suitable for the DGVF controller to achieve the upper-level heterogeneous coordination in Problem~\ref{problem_2} later. Additionally, in numerous prior works on UAV coordination (see, e.g., \cite{dong2014time}), it is a well-accepted operation to approximate the UAV's dynamics as second-order dynamics. This simplification allows for more focused research on the coordination navigation of the CDUS.
\end{remark}

\subsection{Multi-Vehicle Path Following}
Inspired by the definition of $\mathcal P^{phy}$ in Eq.~\eqref{HD_path_phy} for a single vehicle, we suppose that the $i$-th prescribed path $\mathcal P_i^{phy}$ for USV $i, i\in \mathcal V_1,$ is, 
\begin{align}\label{desired_path_USV}
\mathcal P_i^{phy}=&\big\{\sigma_i:=[\sigma_{i,x} , \sigma_{i,y}]\t\in\mathbb{R}^2~|~\nonumber\\
                              &\phi_{i,x}(\sigma_i):=\sigma_{i,x}-f_{i,x}(\omega_{i})=0, \nonumber\\
                              &\phi_{i,y}(\sigma_i):=\sigma_{i,y}-f_{i,y}(\omega_{i})=0\big\}, i\in\mathcal V_1,
\end{align}
where $\sigma_i$ are the coordinates, $\phi_{i,x}(\sigma_i), \phi_{i,y}(\sigma_i)$ denote the zero-level implicit functions, $f_{i,x}(\omega_{i}), f_{i,y}(\omega_{i})$ are the parametric functions, and $\omega_{i}$ represent the virtual coordinate of the $i$-th prescribed path $\mathcal P_i^{phy}$.

Let $\xi_i:=[\sigma_{i,x} , \sigma_{i,y}, \omega_{i}]\t\in\mathbb{R}^3$, it follows from Definition~\ref{definition_GVF} that the $i$-th prescribed path $\mathcal P_i^{phy}$ in \eqref{desired_path_USV} is transformed to a corresponding higher-dimensional path
\begin{align}
\label{desired_path_hgh_USV}
\mathcal P_i^{hgh}=&\big\{\xi_i\in\mathbb{R}^{3}~|~\nonumber\\
                              &\phi_{i,x}(\xi_i):=\sigma_{i,x}-f_{i,x}(\omega_{i})=0, \nonumber\\
                              &\phi_{i,y}(\xi_i):=\sigma_{i,y}-f_{i,y}(\omega_{i})=0\big\}, i\in\mathcal V_1.
\end{align}
Defining $\zeta_i:=[q_{i,x}, q_{i,y}, \omega_{i}]\t\in\mathbb{R}^{3}$ and substituting the position $q_i=[q_{i,x}, q_{i,y}]\t$ of the $i$-th USV in \eqref{USV_kinetic} into the $i$-th prescribed higher-dimensional path $\mathcal P_i^{hgh}$,  the path-following errors $\phi_{i,x}(\zeta_i), \phi_{i,y}(\zeta_i),$ between USV $i$ and $\mathcal P_i^{hgh}$ in \eqref{desired_path_hgh_USV} become
\begin{align}
\label{err_phi_USV}
\phi_{i,x}(\zeta_i)=&q_{i,x}-f_{i,x}(\omega_{i}),\nonumber\\
\phi_{i,y}(\zeta_i)=&q_{i,y}-f_{i,y}(\omega_{i}), i\in\mathcal V_1.
\end{align}
Analogously, we define  the $j$-th prescribed path $\mathcal P_j^{phy}$ for UAV $j, j\in\mathcal V_2$ in the UAV subgroup by
\begin{align}\label{desired_path_UAV}
\mathcal P_j^{phy}=&\big\{\sigma_j:=[\sigma_{j,x} , \sigma_{j,y},  \sigma_{j,z}]\t\in\mathbb{R}^3~|~\nonumber\\
                              & \phi_{j,x}(\sigma_j):=\sigma_{j,x}-f_{j,x}(\omega_{j})=0, \nonumber\\
                              & \phi_{j,y}(\sigma_j):=\sigma_{j,y}-f_{j,y}(\omega_{j})=0, \nonumber\\
                              & \phi_{j,z}(\sigma_j):=\sigma_{j,z}-f_{j,z}(\omega_{j})=0\big\}, j\in\mathcal V_2,
\end{align}
with the coordinates $\sigma_j$, the implicit functions $\phi_{j,x}, \phi_{j,y}, \phi_{j,z}$, and the parametric functions $f_{j,x}, f_{j,y}, f_{j,z}$. It follows from Eq.~\eqref{desired_path_UAV} that the higher-dimensional path $\mathcal P_j^{hgh}$ formulates
\begin{align}
\label{desired_path_hgh_UAV}
\mathcal P_j^{hgh}=&\big\{\xi_j\in\mathbb{R}^{4}~|~\nonumber\\
                              &\phi_{j,x}(\xi_j):=\sigma_{j,x}-f_{j,x}(\omega_{j})=0, \nonumber\\
                              &\phi_{j,y}(\xi_j):=\sigma_{j,y}-f_{j,y}(\omega_{j})=0, \nonumber\\ 
                              &\phi_{j,z}(\xi_j):=\sigma_{j,z}-f_{j,z}(\omega_{j})=0\big\}, j\in\mathcal V_2,
\end{align}
with $\xi_j:=[\sigma_{j,x}, \sigma_{j,y}, \sigma_{j,z}, \omega_{j}]\t\in\mathbb{R}^4$. Then, substituting the position $q_j=[q_{j,x}, q_{j,y}, q_{j,z}]\t$ of the $j$-th UAV in Eq.~\eqref{UAV_kinetic} into
Eq.~\eqref{desired_path_hgh_UAV} yields the path-following errors for UAV $j$,
\begin{align}
\label{err_phi_UAV}
\phi_{j,x}(\zeta_j)=&q_{j,x}-f_{j,x}(\omega_{j}),\nonumber\\
\phi_{j,y}(\zeta_j)=&q_{j,y}-f_{j,y}(\omega_{j}), \nonumber\\
\phi_{j,z}(\zeta_j)=&q_{j,z}-f_{j,z}(\omega_{j}), j\in\mathcal V_2,
\end{align}
with $\zeta_j:=[q_{j,x}, q_{j,y}, q_{j,z}, \omega_{j}]\t\in\mathbb{R}^{4}$.

Let $\Phi_i^{[1]}(\zeta_i):=[\phi_{i,x}(\zeta_i), \phi_{i,y}(\zeta_j)]\t,  i\in\mathcal V_1$ and $\Phi_j^{[2]}(\zeta_j):=[\phi_{j,x}(\zeta_j),\phi_{j,y}(\zeta_j), \phi_{j,z}(\zeta_j)]\t, j\in\mathcal V_2$, it then follows from 
Eqs.~\eqref{err_phi_USV} and \eqref{err_phi_UAV} that the objective of the multi-UAV-USV navigation becomes guiding the CDUS to converge to and maneuver along the prescribed paths, i.e.,
\begin{align}
\label{err_USV_UAV}
&\lim_{t\rightarrow\infty}\Phi_i^{[1]}(\zeta_i(t))=\mathbf{0}_2, \lim_{t\rightarrow\infty}\dot{\omega}_i(t)=\dot{\omega}_i^{\ast}\neq0, i\in\mathcal V_1,\nonumber\\ 
&\lim_{t\rightarrow\infty}\Phi_j^{[2]}(\zeta_j(t))=\mathbf{0}_3, \lim_{t\rightarrow\infty}\dot{\omega}_j(t)=\dot{\omega}_j^{\ast}\neq0, j\in\mathcal V_2,
\end{align}
where $\dot{\omega}_i^{\ast}$ and $\dot{\omega}_j^{\ast}$ are the desired derivatives of the virtual coordinates given in Eqs.~\eqref{derivative_omega_USV} and \eqref{derivative_omega_UAV} later.
\begin{remark}
The path-following errors $\Phi_i^{[1]}(\zeta_i(t)), i\in\mathcal V_1$  and $\Phi_j^{[2]}(\zeta_j(t)), j\in\mathcal V_2$ in Eq.~\eqref{err_USV_UAV} are of different dimensions. The former $\Phi_i^{[1]}(\zeta_j) \in\mathbb{R}^2$ for the USV subgroup runs in the 2D plane, whereas the latter $\Phi_j^{[2]}(\zeta_j)\in\mathbb{R}^3$ for the UAV subgroup is in the 3D Euclidean space. It inevitably poses challenging issues to the coordination of such heterogeneous CDUS later. The condition $\lim_{t\rightarrow\infty}\dot{\omega}_i(t)=\dot{\omega}_i^{\ast}\neq0, i\in\mathcal V_1\cup\mathcal V_2,$ in \eqref{err_USV_UAV} implicitly describes the path maneuvering of the CDUS guided by the high-dimensional GVF in \eqref{high_eq_GVF1}.
\end{remark}

\begin{figure}[!htb]
\centering
\includegraphics[width=\hsize]{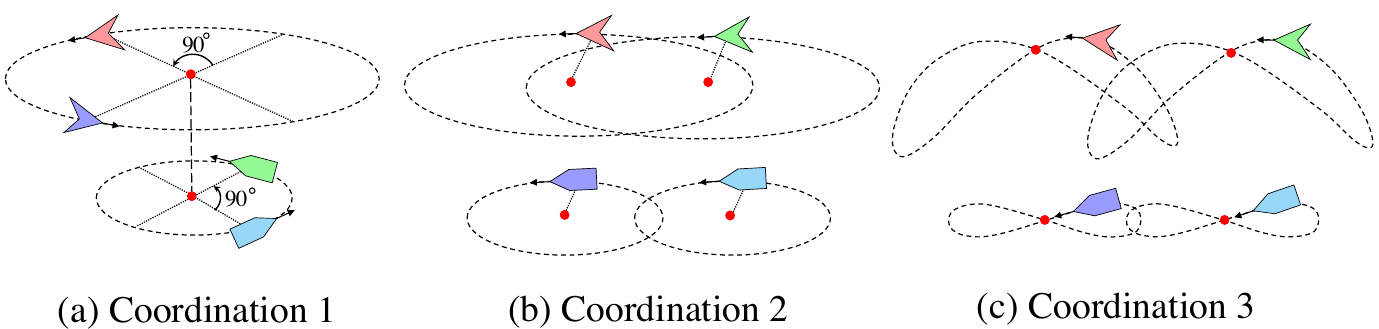}
\caption{Illustration of three distinct kinds of cross-domain coordinated navigation of the CDUS consisting of 2 UAVs and 2 USVs.
(a)~Concentric-circular-path coordinated navigation of the CDUS maneuvers with evenly distributed phases ($90^{\circ}$), where the two UAVs maneuver along a bigger prescribed circular path and two USVs maneuver a smaller one.
(b) Circular-path coordinated navigation of the CDUS maneuvers along the prescribed circular paths with the same radius but distinct circular centers, where the USVs and UAVs move with the same angle phases on the respective circle. (c) Self-intersecting-path coordinated navigation with two UAVs moving along the prescribed 3D Lissajous paths, and two USVs
moving along the 2D Lissajous paths.
 (The triangles are the UAVs and the vessel-like shapes are the USVs. The red points denote the circular center or self-intersected points of the prescribed paths.)}
\label{illustration_CDUS}
\end{figure}

\subsection{Cross-Domain Coordination Navigation}
\label{Preliminary_D}
Based on the path-following errors in Eq.~\eqref{err_USV_UAV} with the additional coordinate $\omega_i$, we are ready to introduce the cross-domain heterogeneous coordination of the CDUS.

Firstly, we define the communication topology of the heterogeneous CDUS by $\mathcal G (\mathcal V, \mathcal E)$, where $\mathcal V=\mathcal V_1\cup\mathcal V_2$ represents the node set of USVs and UAVs, and $\mathcal E:=\{(i,j), j\in\mathcal N_i, i\in\mathcal V\}$ the edge set. Here, $\mathcal N_i$ denotes the neighborhood set of vehicle~$i$. Moreover, let $A=[a_{i,j}]\in\mathbb{R}^{(n+m)\times (n+m)}$ be the adjacency matrix, where $a_{i,j}>0$ if $(i,j)\in\mathcal E$, otherwise $a_{i,j}=0$. Then, the Laplacian matrix of the CDUS $\mathcal L=[l_{i,j}]\t\in\mathbb{R}^{(n+m)\times(n+m)}$ is defined to be $l_{i,j}=-a_{i,j}, i\neq j$ and $l_{i,j}=\sum_{s=1}^{n+m}a_{i,s}, i=j$. Let $L_i\in\mathbb{R}^{1\times (n+m)}$ be the $i$-th row of the matrix $\mathcal L$.

 To achieve coordinated navigation of the CDUS along the prescribed paths $\mathcal P_i^{phy}, i\in\mathcal V$, we introduce the following reasonable assumptions.

\begin{assumption}
\label{assu_topology}
The communication topology of the CDUS $\mathcal G (\mathcal V, \mathcal E)$ is assumed to contain a connected directed spanning tree.
\end{assumption}

Since the motions of the CDUS are coordinated based on inter-vehicle parametric virtual coordinates $\omega_i, i\in\mathcal V$, Assumption~\ref{assu_topology} needs to ensure the full connectivity of the CDUS in a directed network (spanning tree) all along, which can be utilized to achieve $\lim_{t\rightarrow\infty}\omega_i(t)-\omega_j(t)=\Delta_{i,j}, j\in\mathcal N_i, i\in \mathcal V$ (i.e., Objective~3 of Definition~\ref{CPF_definition}) in Lemma~\ref{lemma_coordination_behavior} later. Therefore, Assumption~\ref{assu_topology} is a necessary condition for the heterogeneous coordination of the CDUS, which is also well-accepted in many previous MAS coordination works \cite{liu2021positive,chen2022event}.

%Assumption~\ref{assu_topology} is a necessary condition of the heterogeneous coordination of the CDUS, which is well-accepted in the literature concerning MAS control \cite{su2015distributed,liu2021positive,chen2022event}. 
\begin{assumption}
\label{assp_derivative}
The first and second derivatives of $f_{i,x}(\cdot), f_{i,y}(\cdot), i\in\mathcal V_1$ in Eq.~\eqref{desired_path_USV} and $f_{j,x}(\cdot), f_{j,y}(\cdot), f_{j,z}(\cdot), j\in\mathcal V_2$ in Eq.~\eqref{desired_path_UAV} are all bounded, i.e., 
\begin{align*}
&|f_{i,x}(\omega_{i})|\leq\eta_{i,x},~|f_{i,y}(\omega_{i})|\leq\eta_{i,y},\nonumber\\
&|f_{j,x}(\omega_{j})|\leq\eta_{j,x},~|f_{j,y}(\omega_{j})|\leq\eta_{j,y},~|f_{j,z}(\omega_{j})|\leq\eta_{j,z}, 
\end{align*}
with some unknown positive parameters $\eta_{i,x}\in\mathbb{R}^+, \eta_{i,y}\in\mathbb{R}^+, \eta_{j,x}\in\mathbb{R}^+, \eta_{j,y}\in\mathbb{R}^+,  \eta_{j,z}\in\mathbb{R}^+, i\in\mathcal V_1, j\in\mathcal V_2$.
\end{assumption}

Assumption~\ref{assp_derivative} illustrates that the prescribed path $\mathcal P_{i}^{phy}$ for the CDUS cannot change too quickly, which is reasonable in practice and necessary for rigorous convergence analysis later~\cite{yao2021singularity}.

Recalling the high-dimensional GVF $\chi^{hgh}$  in Definition~\ref{definition_GVF}, the path-following errors in Eq.~\eqref{err_USV_UAV}, the heterogenous neighborhood $\mathcal N_i$ and Assumptions~\ref{assp_error}-\ref{assp_derivative}, we can give the definition of the cross-domain coordinated navigation.

\begin{definition}
\label{CPF_definition}
(Cross-domain coordinated navigation) The group of heterogeneous CDUS ${\mathcal V~(\mathcal V_1 \cup \mathcal V_2})$ collectively forms a coordinated navigation whereas maneuvering along the prescribed path $\mathcal {P}_i^{phy}, i\in\mathcal{V}$ if the following objectives are satisfied,
\begin{enumerate}[1.]
\item 
 \textbf{(Path convergence)} All vehicles approach the prescribed paths $\mathcal P_i^{phy}$ asymptotically as $t\rightarrow\infty$, i.e.,  
\begin{align*}
\lim_{t\rightarrow\infty}\Phi_i^{[1]}(\zeta_i(t))=\mathbf{0}_2, i\in\mathcal V_1, \lim_{t\rightarrow\infty}\Phi_j^{[2]}(\zeta_j(t))=\mathbf{0}_3, j\in\mathcal V_2,
\end{align*}
with $\Phi_i^{[1]}(\zeta_i), i\in\mathcal V_1, \Phi_j^{[2]}(\zeta_j), j\in\mathcal V_2$ given in \eqref{err_USV_UAV}.

\item \textbf{(Path maneuvering)} All vehicles maneuver along the prescribed paths $\mathcal P_i^{phy}$ as $t\rightarrow\infty$, i.e.,  
\begin{align*}
\lim_{t\rightarrow\infty}\dot{\omega}_i(t)=\dot{\omega}_i^{\ast}\neq0, i\in\mathcal V_1\cup\mathcal V_2,
\end{align*}
with the virtual coordinate $\omega_i$ defined in Definition~\ref{definition_GVF} and $\dot{\omega}_i^{\ast}$ given in Eqs.~\eqref{derivative_omega_USV} and \eqref{derivative_omega_UAV}.

\item \textbf{(Heterogeneous coordination)} All vehicles coordinate the heterogeneous formation via inter-vehicle parametric virtual coordinates, i.e.,
\begin{align}
\label{specified_parametric_displace}
\lim_{t\rightarrow\infty} \omega_i(t)-\omega_j(t)=\Delta_{i,j}, j\in\mathcal N_i, i\in \mathcal V,
\end{align}
\end{enumerate}
where the constant $\Delta_{i,j}\in\mathbb{R}$ denotes a specified parametric displacement among the neighboring vehicles, and $\mathcal N_i$ is the neighborhood set of vehicle $i$.
\end{definition}

In Definition~\ref{CPF_definition}, Objectives 1 and 2 ensure the path following of the CDUS. Significantly, different from interacting the sophisticated Euclidean positions ($q_i, q_j$) in traditional heterogeneous coordination works~\cite{ong2021consensus,chen2019feedforward}, Objective 3 determines the heterogeneous coordination by only communicating a scalar, i.e., the virtual coordinate $\omega_{i}$, which substantially decreases the communication, sensor and calculation costs, and implicitly avoids the well-accepted assumption of homogeneous kernel~\cite{chen2019feedforward} in most of heterogeneous coordination problems.

\begin{remark}
By coordinating each vehicle's virtual coordinate $\omega_i$ among neighbor vehicles $\mathcal N_i$ in Objective 3 of Definition~\ref{CPF_definition}, the coordination of the CDUS swarm is not limited to achieving convergence to a single point for all vehicles anymore. As shown in Definition~\ref{CPF_definition}, it can also be employed as a path-following coordinated deployment of many different unmanned vehicles on whole prescribed and priori trajectories, which thus enriches applicable scenarios of the CDUS swarm. Then, by using different kinds of prescribed paths in Eqs.~\eqref{desired_path_USV} and \eqref{desired_path_UAV}, and distinct parametric displacement $\Delta_{i,j}$ in~Eq.~\eqref{specified_parametric_displace},  Definition~\ref{CPF_definition} illustrates various cross-domain heterogeneous navigation applications. As illustrated in Fig.~\ref{illustration_CDUS}, the CDUS performs three different kinds of coordinated navigation operations with different prescribed paths.
\end{remark}

\begin{remark}
The homogeneous kernel commonly encountered in the heterogeneous MAS problem, refers to a common dynamics that all agents can reach once synchronization is achieved~\cite{chen2019feedforward}. Such a “homogeneous kernel” does not always exists for a heterogeneous network as all agents contain different dynamics, which poses challenges in the heterogeneous coordination problems. However, in this paper, the heterogeneous coordination of the CDUS is achieved by merely communicating their virtual coordinates, which thus can implicitly avoid such challenges.
\end{remark}

\subsection{Problem Formulation}
Let $u_{i,r}, v_{i,r}\in\mathbb{R}$ be the reference guidance signals of the surge and sway velocities $u_i, v_i$ of USV $i, i\in\mathcal V_1$, respectively, then we can define the errors $\widetilde{u}_i, \widetilde{v}_i$ to be
\begin{align}
\label{guidance_err}
\widetilde{u}_i:=u_{i}-u_{i,r},~\widetilde{v}_i:=v_{i}-v_{i,r}.
\end{align}
It follows from Eqs.~\eqref{USV_kinetic} and \eqref{guidance_err} that  the kinematics of USV~$i, i\in\mathcal V_1$ can be rewritten in a compact form, 
\begin{align}
\label{USV_kinetic1}
\begin{bmatrix}
 \dot{q}_{i,x}\\
 \dot{q}_{i,y}
\end{bmatrix}
=&
\begin{bmatrix}
\cos\psi_i & -\sin\psi_i\\
\sin\psi_i & \cos\psi_i\\
\end{bmatrix}
\begin{bmatrix}
u_{i,r}+\widetilde{u}_i\\
v_{i,r}+\widetilde{v}_i\\
\end{bmatrix}\nonumber\\
=&\begin{bmatrix}
\cos\psi_i & -\sin\psi_i\\
\sin\psi_i & \cos\psi_i\\
\end{bmatrix}
\begin{bmatrix}
u_{i,r}\\
v_{i,r}\\
\end{bmatrix}
+\begin{bmatrix}
\widetilde{u}_{i,d}\\
\widetilde{v}_{i,d}
\end{bmatrix}
\end{align}
with 
\begin{align}
\label{body_guidance_err}
\widetilde{u}_{i,d}=&\widetilde{u}_i\cos\psi_i -\widetilde{v}_i\sin\psi_i,\nonumber\\
\widetilde{v}_{i,d}=&\widetilde{u}_i\cos\psi_i +\widetilde{v}_i\sin\psi_i.
\end{align}
Let $u_i^{\omega}$ be the derivative of the $i$-th virtual coordinate $\omega_i$, one has  
\begin{align}
\label{dynamic_omega}
\dot{\omega}_i=u_i^{\omega}, i\in\mathcal V_1.
\end{align}
Recalling the path-following errors $\phi_{i,x}(\zeta_i), \phi_{i,y}(\zeta_i)$ of USV~$i$  in Eq.~\eqref{err_phi_USV}, one has that the corresponding derivatives become
\begin{align}
\label{USV_gradient}
\dot{\phi}_{i,x}(\zeta_i)=&\nabla\phi_{i,x}(\zeta_i)\t\dot{\zeta}_i, \nonumber\\
\dot{\phi}_{i,y}(\zeta_i)=&\nabla\phi_{i,y}(\zeta_i)\t\dot{\zeta}_i, i\in\mathcal V_1,
\end{align}
where $\zeta_i=[q_{i,x}, q_{i,y}, \omega_{i}]\t$ and $\nabla\phi_{i,x}(\zeta_i), \nabla\phi_{i,y}(\zeta_i)$ represent the gradient w.r.t. the vector $\zeta_i$, which are calculated below 
\begin{align}
\label{gradient_fi}
\nabla\phi_{i,x}(\zeta_i):=&[1,0, -\partial{f}_{i,x}(\omega_i)]\t,\nonumber\\
\nabla\phi_{i,y}(\zeta_i):=&[0,1, -\partial{f}_{i,y}(\omega_i)]\t, i\in\mathcal V_1,
\end{align}
where $\partial{f}_{i, x}(\omega_i), \partial{f}_{i, y}(\omega_i)$ are the partial derivatives of $f_{i,x}(\omega_i)$ $f_{i,y}(\omega_i)$ w.r.t. $\omega_{i}$
\begin{align}
\label{gradient_f_omega12}
\partial{f}_{i,x}(\omega_i):=\frac{\partial f_{i,x}(\omega_i)}{\partial\omega_{i}},~\partial{f}_{i,y}(\omega_i):=\frac{\partial f_{i,y}(\omega_{i})}{\partial\omega_{i}}.
\end{align}
Substituting Eqs.~\eqref{USV_kinetic1} and~\eqref{gradient_fi} into Eq.~\eqref{USV_gradient}, and rewriting Eqs.~\eqref{dynamic_omega},~\eqref{USV_gradient} into a compact form, one has that the closed-loop system of USV $i, i\in\mathcal V_1$, below
\begin{align}
\label{err_dynamic_USV1}
\begin{bmatrix}
\dot{\phi}_{i,x}(\zeta_i)\\
\dot{\phi}_{i,y}(\zeta_i)\\
\dot{\omega}_{i}\\
\end{bmatrix}
=
D_i 
\begin{bmatrix}
u_{i,r}\\
v_{i,r}\\
u_{i}^{\omega}\\
\end{bmatrix}
+
\begin{bmatrix}
\widetilde{u}_{i,d}\\
\widetilde{v}_{i,d}\\
0
\end{bmatrix}
\end{align}
with
\begin{align*}
D_i=& \begin{bmatrix}                      
                       \cos\psi_i & -\sin\psi_i &-\partial{f}_{i,x}(\omega_i)\\
                       \sin\psi_i & \cos\psi_i & -\partial{f}_{i,y}(\omega_i)\\
                       0 & 0  & 1\\
                       \end{bmatrix}\in \mathbb{R}^{3\times 3}.
\end{align*}
Analogously, let $p_{j,x}^r, p_{j,y}^r, p_{j,z}^r$ be the desired guidance signals of the velocities of UAV $j, j\in\mathcal V_2$, respectively, then we define $\widetilde{p}_{j, x}, \widetilde{p}_{j, y}, \widetilde{p}_{j, z}$ as the errors between the desired signals and the actual velocities below, 
\begin{align}
\label{guidance_err_UAV}
\widetilde{p}_{j, x}:=&p_{j,x}-p_{j,x}^r,\nonumber\\
\widetilde{p}_{j, y}:=&p_{j,y}-p_{j,y}^r,\nonumber\\
\widetilde{p}_{j, z}:=&p_{j,z}-p_{j,z}^r, j\in\mathcal V_2.
\end{align}
Substituting Eq.~\eqref{guidance_err_UAV} into the kinematics of UAV $j$ in Eq.~\eqref{UAV_kinetic} yields 
\begin{align}
\label{UAV_kinetic_1}
 \dot{q}_{j,x} &=p_{j,x}^r+\widetilde{p}_{j, x}, \nonumber\\
 \dot{q}_{j,y} &=p_{j,y}^r+\widetilde{p}_{j, y}, \nonumber\\
 \dot{q}_{j,z} &=p_{j,z}^r+\widetilde{p}_{j, z}.
\end{align}
Let $\zeta_j:=[q_{j,x}, q_{j,y}, q_{j,z}, \omega_{j}]\t\in \mathbb{R}^4$ for UAV $j, j\in\mathcal V_2$, and we can calculate the gradient of $\phi_{j,x}(\zeta_j), \phi_{j,y}(\zeta_j), \phi_{j,z}(\zeta_j)$ in Eq.~\eqref{err_phi_UAV} along the vector $\zeta_j$ 
\begin{align}
\label{gradient_fi_UAV}
\nabla\phi_{j,x}(\zeta_j):=&[1,0,0, -\partial{f}_{j,x}(\omega_j)]\t,\nonumber\\
\nabla\phi_{j,y}(\zeta_j):=&[0,1,0, -\partial{f}_{j,y}(\omega_j)]\t,\nonumber\\
\nabla\phi_{j,z}(\zeta_j):=&[0,0,1, -\partial{f}_{j,z}(\omega_j)]\t, j\in\mathcal V_2,
\end{align}
with
\begin{align}
\label{gradient_f_omega345}
&\partial{f}_{j,x}(\omega_j):=\frac{\partial f_{j,x}(\omega_j)}{\partial\omega_{j}},~\partial{f}_{j,y}(\omega_j):=\frac{\partial f_{j,y}(\omega_{j})}{\partial\omega_{j}},\nonumber\\
&\partial{f}_{j,z}(\omega_j):=\frac{\partial f_{j,z}(\omega_j)}{\partial\omega_{j}}.
\end{align}
It follows from Eqs.~\eqref{dynamic_omega}, \eqref{UAV_kinetic_1}, \eqref{gradient_fi_UAV} that the closed-loop system $\{\phi_{j,x}(\zeta_j), \phi_{j,y}(\zeta_j), \phi_{j,z}(\zeta_j), \omega_j\}$
 of UAV $j, j\in\mathcal V_2$ is formulated below
\begin{align}
\label{err_dynamic_UAV1}
\begin{bmatrix}
\dot{\phi}_{j,x}(\zeta_j)\\
\dot{\phi}_{j,y}(\zeta_j)\\
\dot{\phi}_{j,z}(\zeta_j)\\
\dot{\omega}_{j}\\
\end{bmatrix}
=
D_j
\begin{bmatrix}
p_{j,x}^r\\
p_{j,y}^r\\
p_{j,z}^r\\
u_{j}^{\omega}\\
\end{bmatrix}
+
\begin{bmatrix}
\widetilde{p}_{j, x}\\
\widetilde{p}_{j, y}\\
\widetilde{p}_{j, z}\\
0
\end{bmatrix}
\end{align}
with 
\begin{align*}
D_j=& \begin{bmatrix}                      
                       1 & 0 & 0 &-\partial{f}_{j,x}(\omega_j)\\
                       0 & 1 & 0 &-\partial{f}_{j,y}(\omega_j)\\
                       0 & 0  & 1 & -\partial{f}_{j,z}(\omega_j)\\
                       0 & 0 & 0 & 1\\
                       \end{bmatrix}\in \mathbb{R}^{4\times 4}.
\end{align*}
In the presence of heterogeneous USVs' and UAVs' dynamics in Eqs.~\eqref{USV_kinetic} and \eqref{UAV_kinetic}, the closed-loop systems \eqref{err_dynamic_USV1}  and \eqref{err_dynamic_UAV1}  are established by a hierarchical control architecture in terms of upper-level heterogeneous guidance control  and lower-level signal tracking regulation, which then implies that the coordinated navigation of the CDUS are decoupled into two problems.

\begin{figure}[!htb]
\centering
\includegraphics[width=\hsize]{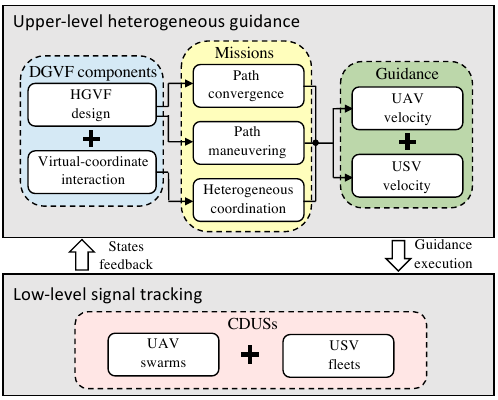}
\caption{A hierarchical control architecture, which consists of an upper-level  heterogeneous guidance velocity controller and a lower-level signal tracking regulator.}
\label{control_structure}
\end{figure}

\begin{problem}
\label{problem_1}
 (Lower-level signal tracking regulation) Design a regulated control law $\{\tau_{i,u}, \tau_{i,r}\}, i\in\mathcal V_1,$ and $\{\tau_{j, x}, \tau_{j, y}, \tau_{j, z}\}, j\in\mathcal V_2,$ for the dynamics of USV $i$ in \eqref{USV_kinematic} and UAV $j$ in \eqref{UAV_kinetic} such that the desired guidance velocities of the CDUS $u_{i}^r, v_{i}^r, p_{j,x}^r, p_{j,y}^r, p_{j,z}^r, i\in \mathcal V_1, j\in\mathcal V_2 $,
can be exponentially tracked, i.e., 
\begin{align}
\label{conver_condi_1}
&\lim_{t\rightarrow\infty}\widetilde{u}_i(t)=0,
\lim_{t\rightarrow\infty}\widetilde{v}_i(t)=0,\nonumber\\
&\lim_{t\rightarrow\infty}\widetilde{p}_{j, x}(t)=0, 
\lim_{t\rightarrow\infty}\widetilde{p}_{j, y}(t)=0, 
\lim_{t\rightarrow\infty}\widetilde{p}_{j, z}(t)=0,
\end{align}
exponentially.
\end{problem}

\begin{problem}
\label{problem_2}
(Upper-level heterogeneous guidance control):  Design a  DGVF guidance protocol
\begin{align}
\label{pro_desired_signal}
[u_{i}^r, v_{i}^r, u_{i}^{\omega}]\t:=& \chi_i^{hgh}(\psi_i, \phi_{i,x}, \phi_{i,y}, \partial{f}_{i,x}, \partial{f}_{i,y}, \omega_{i}, \omega_{k}), \nonumber\\
                                                                  & \forall i\in{\cal V}_1, k\in \mathcal N_i,\nonumber\\
[p_{j,x}^r, p_{j,y}^r, p_{j,z}^r, u_{j}^{\omega} ]\t:=& \chi_j^{hgh}(\phi_{j,x},\phi_{j,y}, \phi_{j,z}, \partial{f}_{j,x}, \partial{f}_{j,y}, \partial{f}_{j,z},  \nonumber\\
                                                                  & \omega_{j}, \omega_{k}), \forall j\in{\cal V}_2, k\in \mathcal N_j,
\end{align} 
for the CDUS composed of~\eqref{err_dynamic_USV1} and~\eqref{err_dynamic_UAV1} to fulfill heterogeneous navigation (i.e., the three objectives in Definition~\ref{CPF_definition}) if Problem~\ref{problem_1} is solved. 
Here, $\phi_{i,x}, \phi_{i,y}, \partial{f}_{i,x}, \partial{f}_{i,y}$ denote $\phi_{i,x}(\zeta_i), \phi_{i,y}(\zeta_i), $ $ \partial{f}_{i,x}(\omega_i)/\partial \omega_i$, $\partial{f}_{i,y}(\omega_i)/\partial \omega_i$ of USV $i$, and $\phi_{j,x},\phi_{j,y}, \phi_{j,z},  \partial{f}_{j,x}, $ $\partial{f}_{j,y}$, $\partial{f}_{j,z}$ represent $\phi_{j,x}(\zeta_j),\phi_{j,y}(\zeta_j), \phi_{j,z}(\zeta_j),  \partial{f}_{j,x}(\omega_j)/\partial \omega_j, \partial{f}_{j,y}(\omega_j)/\partial \omega_j,$ $ \partial{f}_{j,z}(\omega_j)/\partial \omega_j$ of UAV $j$ for conciseness if there exists no ambiguity.
\end{problem}

\begin{remark}
Illustration of such a hierarchical control architecture is given in Fig.~\ref{control_structure}. In particular, Problem~\ref{problem_1} ensures the fundamental condition for the rigorous convergence of the upper-level heterogeneous coordination in Problem~\ref{problem_2}. Moreover, by regarding the low-level tracking errors as
the external disturbances, Problem~\ref{problem_2} is transferred to the heterogeneous coordination subject to exponentially vanishing tracking errors from Problem~\ref{problem_1}.
\end{remark}

\section{Main Technical Result }

First of all, recalling the dynamics of USVs and UAVs in Eqs.~\eqref{USV_kinematic} and \eqref{UAV_kinetic}, the corresponding control laws $\tau_{i,u}, \tau_{i,r} $ for USV $i, i\in \mathcal V_1$, in Problem~\ref{problem_1} are designed according to \cite{hu2021bearing}
\begin{align}
\label{regulated_law_USV}
\tau_{i,u}=&\frac{1}{\epsilon_3u_{i}^r}\big(\dot{u}_{i}^ru_i-\epsilon_1u_{i}^ru_i-\epsilon_2u_i^rv_ir_i-b_{i,1}u_{i}\widetilde{u}_i\big),\nonumber\\
\tau_{i.r}=&\frac{1}{\epsilon_5}\big(-\epsilon_4r_i-b_{i,2}\widetilde{r}_i+\dot{r}_i^r\big), i\in\mathcal V_1,
\end{align}
with $b_{i,1}, b_{i,2}\in\mathbb{R}^+$ being the positive gains, $\epsilon_i, i\in\mathbb{Z}_1^5, u_i, v_i,$  $r_i$  given in \eqref{USV_kinematic}, and $u_{i,r}, \widetilde{u}_i$ defined in \eqref{guidance_err}. Here, $\widetilde{r}_i$ and $r_i^r$ denote the yaw velocity error and the desired yaw velocity, respectively, which are formulated below,
\begin{align}
\label{regulated_law_USV2}
\widetilde{r}_i=&r_i-\widetilde{r}_i^r, \nonumber\\
r_i^r=&\frac{1}{\epsilon_7u_i}(-\epsilon_6v_i+\dot{v}_i^r-b_{i,3}\widetilde{v}_i),
\end{align}
where $b_{i,3}\in\mathbb{R}^+$ is the positive gain, $\epsilon_6, \epsilon_7$ are the parameters in Eq.~\eqref{USV_kinematic},  and $v_i^r, \widetilde{v}_i$ are given in~\eqref{guidance_err}. As for UAV $j, j\in\mathcal V_2$, the regulated control laws are designed below
\begin{align}
\label{regulated_law_UAV}
\tau_{j, x}=& -b_{i,4}\widetilde{p}_{j, x}+\dot{p}_{j,x}^r,\nonumber\\
\tau_{j, y}=& -b_{i,4}\widetilde{p}_{j, y}+\dot{p}_{j,y}^r,\nonumber\\
\tau_{j, z}=& -b_{i,4}\widetilde{p}_{j, z}+\dot{p}_{j,z}^r,
\end{align}
where $b_{i,4}\in\mathbb{R}^+$ is the positive gain and $p_{j,x}^r, p_{j,y}^r, p_{j,z}^r$, $\widetilde{p}_{j, x}, \widetilde{p}_{j, y}, \widetilde{p}_{j, z}$ are given in \eqref{guidance_err_UAV}.

With the assistance of the detailed controller in Eqs.~\eqref{regulated_law_USV}, \eqref{regulated_law_USV2} and \eqref{regulated_law_UAV}, the lower-level signal tracking in Problem~\ref{problem_1} can be addressed in the following lemma.

\begin{lemma}
\label{lemma_signal_regulation}
For the desired guidance signals $u_{i}^r, v_{i}^r, p_{j,x}^r,$ $p_{j,y}^r, p_{j,z}^r, i\in \mathcal V_1, j\in\mathcal V_2$, the dynamics of USV $i, i\in\mathcal V_1,$ governed by~\eqref{USV_kinematic}, \eqref{regulated_law_USV} and \eqref{regulated_law_USV2}, and UAV $j, j\in\mathcal V_2,$ governed by  \eqref{UAV_kinetic} and \eqref{regulated_law_UAV} are able to achieve the lower-level signal tracking regulation, i.e., the condition~\eqref{conver_condi_1} in Problem~\ref{problem_1}.
\end{lemma}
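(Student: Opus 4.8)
The plan is to substitute each designed control law directly into the corresponding open-loop dynamics, read off the closed-loop dynamics of the five tracking errors appearing in \eqref{conver_condi_1}, and show that each reduces to a stable scalar ODE, possibly perturbed by an exponentially vanishing coupling term. I would proceed from the most decoupled error to the most coupled one, exploiting the triangular (cascade) structure that the backstepping-style design in \eqref{regulated_law_USV}--\eqref{regulated_law_UAV} is constructed to produce.

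First I would dispatch the UAV case, which is the cleanest. Using $\widetilde p_{j,x}=p_{j,x}-p_{j,x}^r$ together with $\dot p_{j,x}=\tau_{j,x}$ from \eqref{UAV_kinetic} and substituting $\tau_{j,x}$ from \eqref{regulated_law_UAV} gives
\begin{align*}
\dot{\widetilde p}_{j,x}=\tau_{j,x}-\dot p_{j,x}^r=-b_{i,4}\widetilde p_{j,x},
\end{align*}
and identically for the $y$- and $z$-channels, so each UAV error obeys a scalar linear ODE with rate $-b_{i,4}<0$ and vanishes exponentially.

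Next I would treat the USV chain in the order yaw $\to$ sway $\to$ surge. Substituting $\tau_{i,r}$ from \eqref{regulated_law_USV} into $\dot r_i=\epsilon_4 r_i+\epsilon_5\tau_{i,r}$ of \eqref{USV_kinematic} cancels the $\epsilon_4 r_i$ and $\dot r_i^r$ terms and leaves $\dot{\widetilde r}_i=-b_{i,2}\widetilde r_i$, i.e.\ exponential decay of the yaw error. Substituting the design of $r_i^r$ from \eqref{regulated_law_USV2} into $\dot v_i=\epsilon_6 v_i+\epsilon_7 u_i r_i$ and writing $r_i=r_i^r+\widetilde r_i$ then yields
\begin{align*}
\dot{\widetilde v}_i=-b_{i,3}\widetilde v_i+\epsilon_7 u_i\widetilde r_i,
\end{align*}
a stable scalar system driven by the input $\epsilon_7 u_i\widetilde r_i$; since $\widetilde r_i$ already decays exponentially and $u_i$ is bounded, a standard cascade / input-to-state argument gives $\widetilde v_i\to 0$ exponentially. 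Finally, substituting $\tau_{i,u}$ from \eqref{regulated_law_USV} into $\dot u_i=\epsilon_1 u_i+\epsilon_2 v_i r_i+\epsilon_3\tau_{i,u}$ cancels the drift and Coriolis terms and, after inserting $u_i=u_i^r+\widetilde u_i$, collapses the surge error to
\begin{align*}
\dot{\widetilde u}_i=\frac{1}{u_i^r}\big(\dot u_i^r-b_{i,1}u_i\big)\widetilde u_i,
\end{align*}
which I would certify with the Lyapunov candidate $V_i=\tfrac12\widetilde u_i^2$; its derivative $\dot V_i=\frac{\widetilde u_i^2}{u_i^r}\big(\dot u_i^r-b_{i,1}u_i\big)$ is negative definite whenever $b_{i,1}u_i>\dot u_i^r$, which is secured by taking $b_{i,1}$ large with $u_i$ bounded below.

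The hard part will be twofold. First, the laws $\tau_{i,u}$ and $r_i^r$ divide by $u_i^r$ and $u_i$, so the controller is only well posed when the surge velocities remain bounded away from zero; I would impose this as a standing nonsingularity condition, natural for forward-cruising USVs, and use it both for well-posedness and, together with an upper bound on $u_i$, to dominate the coupling term $\epsilon_7 u_i\widetilde r_i$ by an exponentially decaying envelope. Second, because the surge, sway and yaw errors are coupled, exponential regulation cannot simply be read from a single decoupled ODE; the cascade ordering above, combined with the boundedness of $u_i$ and the exponential decay of $\widetilde r_i$, is what upgrades the perturbed sway dynamics to genuine exponential convergence. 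Once these points are settled, all five limits in \eqref{conver_condi_1} hold exponentially, which is precisely the claim of Problem~\ref{problem_1}.
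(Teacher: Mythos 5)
Your proposal is correct and, where the paper actually gives an argument, it coincides with it: the UAV half is exactly the paper's computation, namely substituting \eqref{regulated_law_UAV} into \eqref{UAV_kinetic} to obtain $\dot{\widetilde{p}}_{j,x}=-b_{i,4}\widetilde{p}_{j,x}$ (and likewise for $y,z$), hence exponential decay. The difference is in the USV half: the paper does not prove it at all, stating only that the convergence of $\widetilde{u}_i,\widetilde{v}_i$ ``follows from'' \eqref{USV_kinematic}, \eqref{regulated_law_USV}, \eqref{regulated_law_USV2} with the proof ``similar to \cite{hu2021bearing} and thus omitted,'' whereas you carry out the backstepping cascade yaw $\to$ sway $\to$ surge explicitly. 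Your closed-loop computations check out: $\dot{\widetilde{r}}_i=-b_{i,2}\widetilde{r}_i$, $\dot{\widetilde{v}}_i=-b_{i,3}\widetilde{v}_i+\epsilon_7 u_i\widetilde{r}_i$, and $\dot{\widetilde{u}}_i=\frac{1}{u_i^r}(\dot{u}_i^r-b_{i,1}u_i)\widetilde{u}_i$, and the cascade/ISS upgrade of the perturbed sway equation to exponential convergence is the standard and correct way to finish. What your version buys is transparency about the hypotheses the paper leaves implicit: the control laws \eqref{regulated_law_USV}--\eqref{regulated_law_USV2} divide by $u_i^r$ and $u_i$, so well-posedness and the sign condition $b_{i,1}u_i>\dot{u}_i^r$ require the surge velocities to be bounded and bounded away from zero; these standing assumptions are nowhere stated in the paper but are genuinely needed, and flagging them is a strength of your write-up rather than a gap.
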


\begin{proof}
On one hand, it follows from Eqs.~\eqref{USV_kinematic}, \eqref{regulated_law_USV} and \eqref{regulated_law_USV2} that the conditions of $\lim_{t\rightarrow\infty}\widetilde{u}_i(t)=0,
\lim_{t\rightarrow\infty}\widetilde{v}_i(t)=0$ for USV $i, i\in\mathcal V_1$ are fulfilled, whose proof is similar to \cite{hu2021bearing} and thus omitted.

On the other hand, taking derivative of $\widetilde{p}_{j,x}, \widetilde{p}_{j,y}, \widetilde{p}_{j,z}$ and substituting Eq.~\eqref{regulated_law_UAV} into Eq.~\eqref{UAV_kinetic} yields 
\begin{align}
\label{err_UAV_signal}
\dot{\widetilde{p}}_{j,x}=& -b_{i,4}\widetilde{p}_{j, x},\nonumber\\
\dot{\widetilde{p}}_{j,y}=& -b_{i,4}\widetilde{p}_{j, y},\nonumber\\
\dot{\widetilde{p}}_{j,z}=& -b_{i,4}\widetilde{p}_{j, z}.
\end{align}
It is straightforward to show from \eqref{err_UAV_signal} that
$$\lim_{t\rightarrow\infty}\widetilde{p}_{j, x}(t)=0, 
\lim_{t\rightarrow\infty}\widetilde{p}_{j, y}(t)=0, 
\lim_{t\rightarrow\infty}\widetilde{p}_{j, z}(t)=0,$$
converging exponentially. The proof is thus completed.
\end{proof}
\begin{remark}
In Lemma~\ref{lemma_signal_regulation}, the errors $\widetilde{u}_i, \widetilde{v}_i, \widetilde{p}_{j,x}, \widetilde{p}_{j,y}, \widetilde{p}_{j,z}, i\in\mathcal V_1, j\in\mathcal V_2$, albeit convergent to zeros exponentially, influence the stability of the upper-level heterogeneous coordination. It may exhibit divergence for sufficiently small errors induced by the lower-level signal tracking in Problem~\ref{problem_1}~\cite{freeman1995global}, i.e., 
the error states $\Phi_i^{[1]}(\zeta_i(t)), \Phi_j^{[2]}(\zeta_j(t)), \omega_k, i\in\mathcal V_1, j\in\mathcal V_2, k\in\mathcal V,$ in Eq.~\eqref{err_USV_UAV} satisfy 
\begin{align*}
&\lim_{t\rightarrow T_1}\Phi_i^{[1]}(\zeta_i(T_1))=\infty, i\in\mathcal V_1,~or~\nonumber\\
&\lim_{t\rightarrow T_1}\Phi_j^{[2]}(\zeta_j(T_1))=\infty, j\in\mathcal V_2,~or~\nonumber\\
&\lim_{t\rightarrow T_1}\omega_k(t)=\infty, k\in\mathcal V,
\end{align*}
for a positive time $T_1>0$, which will be prevented in Lemma~\ref{finite_time_behavior} shown later.
\end{remark}

Inspired by the high-dimensional GVF $\chi^{hgh}$ in \eqref{high_eq_GVF1}, it follows from Eqs.~\eqref{err_phi_USV}, \eqref{gradient_fi} and \eqref{err_dynamic_USV1} that the desired guidance signals $u_{i, r}, v_{i, r}, u_{i}^{\omega}$ for USV $i, i\in\mathcal V_1$ are designed below,  
\begin{align}
\label{control_law_USV}
u_{i, r}:=&\big((-1)^3 \partial{f}_{i,x}-k_{i,x}\phi_{i,x}\big)\cos\psi_i\nonumber\\
              &+\big((-1)^3 \partial{f}_{i,y}-k_{i,y}\phi_{i,y}\big)\sin\psi_i,\nonumber\\
v_{i, r}:=&-\big((-1)^3 \partial{f}_{i,x}-k_{i,x}\phi_{i,x}\big)\sin\psi_i\nonumber\\
              &+\big((-1)^3 \partial{f}_{i,y}-k_{i,y}\phi_{i,y}\big)\cos\psi_i,\nonumber\\
u_{i}^{\omega}:=&(-1)^3+k_{i,x}\phi_{i,x}\partial{f}_{i,x}+k_{i,y}\phi_{i,y}\partial{f}_{i,y}\nonumber\\
                       &-c_i\sum_{k\in\mathcal N_i}(\omega_{i}-\omega_{k}-\Delta_{i,k}), i\in\mathcal V_1,                                      
\end{align}
where $\psi_i$ is the heading angle of USV $i$, $k_{i,x}, k_{i,y}, c_i \in\mathbb R^{+}, i\in\mathcal V_1,$ the corresponding control gains, $\phi_{i,x}, \phi_{i,y},$ $ \partial f_{i,x}, \partial f_{i,y}$ $\omega_i$, given in \eqref{pro_desired_signal}, $\mathcal N_i$ the neighborhood set of vehicle $i$, and $\Delta_{i,k}$ the specified parametric displacement among the neighboring vehicles.

Analogously, it follows from Eqs.~\eqref{err_phi_UAV}, \eqref{gradient_fi_UAV} and \eqref{err_dynamic_UAV1} that the desired guidance signals for UAV $j, j\in\mathcal V_2$ satisfy 
\begin{align}
\label{control_law_UAV}
p_{j, x}^r:=&(-1)^3 \partial{f}_{j,x}-k_{j,x}\phi_{j,x},\nonumber\\
p_{j, y}^r:=&(-1)^3 \partial{f}_{j,y}-k_{j,y}\phi_{j,y},\nonumber\\
p_{j, z}^r:=&(-1)^3 \partial{f}_{j,z}-k_{j,z}\phi_{j,z},\nonumber\\
u_{j}^{\omega}:=& (-1)^3+k_{j,x}\phi_{j,x}\partial{f}_{j,x}+k_{j,y}\phi_{j,y}\partial{f}_{j,y}+k_{j,z}\phi_{j,z}\partial{f}_{j,z}\nonumber\\
                       &-c_j\sum_{k\in\mathcal N_j}(\omega_{j}-\omega_{k}-\Delta_{j,k}), j\in\mathcal V_2,
\end{align}
with the corresponding gains $k_{j,x}\in\mathbb{R}^+, k_{j,y}\in\mathbb{R}^+, k_{j,z}\in\mathbb{R}^+, c_j\in \mathbb{R}^{+}, j\in\mathcal V_2$, and $\phi_{j,x}, \phi_{j,y}, \phi_{j,z}, \partial f_{j,x}, \partial f_{j,y},  \partial f_{j,z}$ $\omega_i, \Delta_{i,k} $ given in Eq.~\eqref{pro_desired_signal}.

To illustrate the upper-level heterogeneous guidance in Problem~\ref{problem_2} more clearly, we first prevent the divergence behavior, and then prove path convergence and maneuvering, and heterogeneous coordination in three lemmas (i.e., Objectives 1-3 in Definition~\ref{CPF_definition}), respectively.

\begin{lemma}
\label{finite_time_behavior}
Under Lemma~\ref{lemma_signal_regulation}, a CDUS composed of \eqref{USV_kinetic}, \eqref{UAV_kinetic}, \eqref{control_law_USV}, \eqref{control_law_UAV} ensures that there exist no divergence behaviors during the navigation process, i.e., 
$$\|\Phi_i^{[1]}(t)\| \leq \delta_{1}, \|\Phi_j^{[2]}(t)\|\leq\delta_{2}, |\omega_k(t)|\leq\delta_3, \forall t>0,$$
where $\Phi_i^{[1]}, \Phi_j^{[2]}, \omega_k, i\in\mathcal V_1, j\in\mathcal V_2, k\in\mathcal V$ are given in Eq.~\eqref{err_USV_UAV} and $\delta_1\in\mathbb{R}^+, \delta_2\in\mathbb{R}^+, \delta_3\in\mathbb{R}^+$ unknown positive constants.
\end{lemma}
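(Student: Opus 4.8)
The plan is to first make the closed-loop error dynamics explicit and then rule out finite-time divergence by a linear-growth (comparison-lemma) argument rather than by exhibiting a strict Lyapunov function. First I would substitute the guidance laws \eqref{control_law_USV} and \eqref{control_law_UAV} into the error systems \eqref{err_dynamic_USV1} and \eqref{err_dynamic_UAV1}. For each USV the rotation block of $D_i$ cancels the $\cos\psi_i/\sin\psi_i$ factors in $u_{i,r},v_{i,r}$, while the $-\partial f_{i,x}u_i^{\omega}$ and $-\partial f_{i,y}u_i^{\omega}$ entries absorb the drift $(-1)^3$ inside $u_i^{\omega}$, so the path-following error collapses to $\dot\phi_{i,x}=-k_{i,x}\phi_{i,x}-\partial f_{i,x}S_i+\widetilde u_{i,d}$ and $\dot\phi_{i,y}=-k_{i,y}\phi_{i,y}-\partial f_{i,y}S_i+\widetilde v_{i,d}$, with $S_i:=k_{i,x}\phi_{i,x}\partial f_{i,x}+k_{i,y}\phi_{i,y}\partial f_{i,y}-c_i\sum_{k\in\mathcal N_i}(\omega_i-\omega_k-\Delta_{i,k})$ and $\dot\omega_i=(-1)^3+S_i$; the block $D_j$ yields the identical structure for each UAV. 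This exposes the system as the nominal high-dimensional GVF flow of \eqref{high_eq_GVF1} perturbed by the consensus coupling $S_i$ and by the tracking residuals $\widetilde u_{i,d},\widetilde v_{i,d},\widetilde p_{j,x},\widetilde p_{j,y},\widetilde p_{j,z}$.

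Next I would stack every error state into $X:=[\dots,(\Phi_i^{[1]})\t,\dots,(\Phi_j^{[2]})\t,\dots,\omega_1,\dots,\omega_{n+m}]\t$ and bound the right-hand side. Three facts make this work: (i) Assumption~\ref{assp_derivative} keeps each coefficient $\partial f_{i,x}(\omega_i),\partial f_{j,z}(\omega_j),\dots$ bounded uniformly in the (possibly large) state $\omega$; (ii) Lemma~\ref{lemma_signal_regulation} makes the residuals $\widetilde u_{i,d},\widetilde v_{i,d},\widetilde p_{j,\cdot}$ bounded and exponentially vanishing, hence an $L_\infty$ forcing; and (iii) $|\sin\psi_i|,|\cos\psi_i|\le 1$. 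Consequently each $S_i$ is affine in $(\phi,\omega)$ with bounded coefficients, and the products $\partial f_{i,x}S_i$ stay affine, so the closed loop obeys a linear-growth estimate $\|\dot X\|\le a(t)+b\|X\|$ with $a\in L_\infty$ and $b$ a finite constant. The comparison lemma (Grönwall) then gives $\|X(t)\|\le\big(\|X(0)\|+\int_0^t a(s)\,ds\big)e^{bt}$, finite on every bounded interval; this precludes the finite-escape scenario flagged in the preceding remark and supplies the constants $\delta_1,\delta_2,\delta_3$ that bound $\Phi_i^{[1]},\Phi_j^{[2]},\omega_k$ over the navigation horizon.

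The hard part is controlling the mutual coupling between the path-following errors and the virtual coordinates: $S_i$ feeds $\partial f(\omega)$-weighted copies of the disagreement $\omega_i-\omega_k$ back into $\dot\phi$, while $\dot\omega_i$ carries the same $S_i$, so neither block can be bounded in isolation. This is exactly where Assumption~\ref{assp_derivative} is indispensable: without a uniform bound on $\partial f$, the products $\partial f_{i,x}S_i$ would be superlinear in $\omega$ and could drive finite-time blow-up. To upgrade the finite-interval estimate toward a genuinely uniform one I would additionally invoke Assumption~\ref{assu_topology}: after the offset change of variables absorbing $\Delta_{i,k}$, the consensus terms assemble into a Laplacian feedback $-\,\mathrm{diag}(c)\,\mathcal L\,\omega$ whose spanning-tree structure renders the disagreement dynamics input-to-state stable with respect to the $\partial f$- and residual-driven inputs, yielding uniform bounds on $\Phi_i^{[1]},\Phi_j^{[2]}$ and on $\omega_i-\omega_k$. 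I would stress that the common drift $(-1)^3$ present in every $\dot\omega_i$ is precisely the path-maneuvering term, so for the scalar coordinates themselves the clean achievable conclusion is forward completeness (no finite escape), which is exactly the ingredient needed before establishing Objectives~1--3.
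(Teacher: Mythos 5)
Your reduction of the closed loop is sound and matches the paper's intermediate step: after substituting \eqref{control_law_USV}--\eqref{control_law_UAV} into \eqref{err_dynamic_USV1} and \eqref{err_dynamic_UAV1} one indeed obtains the perturbed GVF flow with the consensus coupling, which is exactly \eqref{err_dynamic_USV2} and \eqref{err_dynamic_UAV2}. The gap is in what your main argument actually delivers. The Gr\"onwall estimate $\|X(t)\|\le(\|X(0)\|+\int_0^t a(s)\,ds)\,e^{bt}$ only excludes finite escape; it grows without bound in $t$ and therefore does not produce the time-uniform constants $\delta_1,\delta_2,\delta_3$ that the lemma asserts for all $t>0$. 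You recognize this and defer the upgrade to an input-to-state-stability claim for the disagreement dynamics under Assumption~\ref{assu_topology}, but that claim is precisely the non-trivial content of the lemma and cannot be obtained by treating the consensus block as a cascade: as you yourself note, the coupling is bidirectional --- $S_i$ injects $\partial f$-weighted disagreement into $\dot\Phi_i$ while $(\Phi_i)\t K_iF_i$ feeds back into $\dot{\widetilde\omega}_i$ --- so neither subsystem admits a bounded input from the other until boundedness of both has been established. Asserting ISS here is circular without a joint certificate.

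The paper closes this loop with a single energy function: it introduces $\widetilde\omega_i=\omega_i-\omega_i^\ast$ with $\dot\omega_i^\ast=(-1)^3$, takes $V=\tfrac12[(\boldsymbol\Phi^{[1]})\t\boldsymbol K^{[1]}\boldsymbol\Phi^{[1]}+(\boldsymbol\Phi^{[2]})\t\boldsymbol K^{[2]}\boldsymbol\Phi^{[2]}+\boldsymbol{\widetilde\omega}\t C\mathcal L\boldsymbol{\widetilde\omega}]$ as in \eqref{func_V}, and cancels the bidirectional cross terms by completing the square (Eq.~\eqref{V_fact1}), leaving $\dot V\le-\sum_i\alpha_i-\sum_j\beta_j+\gamma$ with $\alpha_i,\beta_j\ge0$ and $\gamma$ driven only by the squared tracking residuals. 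Lemma~\ref{lemma_signal_regulation} makes $\gamma$ exponentially decaying, hence integrable, so $V(t)\le V(0)+\int_0^\infty\gamma(s)\,ds<\infty$ uniformly --- this is the step your proposal is missing, and note that it uses the \emph{integrability} of the residuals, not merely their membership in $L_\infty$ as in your item (ii). One point in your favor: your closing observation that the drifting coordinates $\omega_k$ themselves can at best be forward complete (not uniformly bounded) is a legitimate criticism of the lemma's literal wording; the paper's own $V$ only controls $\boldsymbol{\widetilde\omega}\t C\mathcal L\boldsymbol{\widetilde\omega}$, i.e.\ a disagreement-type quantity of the shifted coordinates, not $|\omega_k|$.
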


\begin{proof}
Suppose USV $i, i\in\mathcal V,$ guided by the GVF $\chi^{hgh}$ in Eq.~\eqref{high_eq_GVF1} already moves along a prescribed path in the 2D plane, we define the corresponding 
desired virtual coordinate of vehicle $i$ by $\omega_i^{\ast}$, whose derivative satisfies  
\begin{align}
\label{derivative_omega_USV}
\dot{\omega}_i^{\ast}=(-1)^3, i\in\mathcal V_1.
\end{align}
Here, the value of $\dot{\omega}_i^{\ast}$ in Eq.~\eqref{derivative_omega_USV} is consistent with $u_i^{\omega}$ in~\eqref{control_law_USV} when $\phi_{i,x}=0, \phi_{i,y}=0, \omega_{i}^{\ast}-\omega_{k}^{\ast}=\Delta_{i,k}, i\in\mathcal V_1, k\in\mathcal N_i$, which indicates that the maneuvering direction of USVs governed by~\eqref{control_law_USV} is the same as the UAVs.

Let $\widetilde{\omega}_{i}$ be the error between the virtual coordinate $\omega_i$ and the desired coordinate $\omega_{i}^*$ of USV $i$,
\begin{align}
\label{err_virc_USV}
\widetilde{\omega}_{i}:={\omega}_{i}-{\omega}_{i}^*, i\in \mathcal V_1.
\end{align}
Taking the derivative of $\widetilde{\omega}_{i}$ in Eq.~\eqref{err_virc_USV} and substituting the desired law $u_i^{\omega}$ in Eq.~\eqref{control_law_USV} and the dynamic $\dot{\omega}_i^*$ into \eqref{derivative_omega_USV} yields
\begin{align}
\label{error_virtual_coordinate}
\dot{\widetilde{\omega}}_{i}=&k_{i,x}\phi_{i,x}\partial{f}_{i,x}+k_{i,y}\phi_{i,y}\partial{f}_{i,y}\nonumber\\
			           &-c_i\sum_{k\in\mathcal N_i}(\widetilde{\omega}_{i}-\widetilde{\omega}_{k}), i\in\mathcal V_1.
\end{align}
Recalling $\Phi_i^{[1]}$ in \eqref{err_USV_UAV} and defining $F_i^{[1]}:=[\partial{f}_{i,x}, \partial{f}_{i,y}]\t\in\mathbb{R}^2, K_i^{[1]}=\mbox{diag}\{k_{i,x}, k_{i,y}\}\in\mathbb{R}^{2\times 2}, \varepsilon_i:=[\widetilde{u}_{i,d}, \widetilde{v}_{i,d}]\t\in\mathbb{R}^2, i\in\mathcal V_1$, it follows from Eqs.~\eqref{err_dynamic_USV1},~\eqref{control_law_USV},~\eqref{error_virtual_coordinate} that
\begin{align}
\label{err_dynamic_USV2}
\begin{bmatrix}
\dot{\Phi}_i^{[1]}\\
\dot{\widetilde{\omega}}_{i}\\
\end{bmatrix}
=& 
\begin{bmatrix}
-K_i^{[1]}\big(I_2+F_i^{[1]}(F_i^{[1]})\t\big)\Phi_i^{[1]}\\
(\Phi_i^{[1]})\t K_i^{[1]}F_i^{[1]}
\end{bmatrix}\nonumber\\
&+
\begin{bmatrix}
-F_i^{[1]}\big((-c_i\sum\limits_{k\in\mathcal N_i}(\widetilde{\omega}_{i}-\widetilde{\omega}_{k})\big)+\varepsilon_i\\
-c_i\sum\limits_{k\in\mathcal N_i}(\widetilde{\omega}_{i}-\widetilde{\omega}_{k})
\end{bmatrix}.
\end{align}
Analogously, for the UAV subgroup, we also define the corresponding error between the virtual coordinate $\omega_{j}$ and the desired coordinate $\omega_{j}^{\ast}$ as 
\begin{align}
\label{err_virc_UAV}
\widetilde{\omega}_{j}:={\omega}_{j}-{\omega}_{j}^*, j\in \mathcal V_2,
\end{align}
with the derivative of $\omega_j^{\ast}$ satisfying
\begin{align}
\label{derivative_omega_UAV}
\dot{\omega}_j^{\ast}=(-1)^3, j\in\mathcal V_2.
\end{align}
Then, it follows from Eqs.~\eqref{err_dynamic_UAV1}, \eqref{control_law_UAV}, \eqref{err_virc_UAV}, \eqref{derivative_omega_UAV} that the closed-loop system for the UAV $j, j\in\mathcal V_2$ is formulated below
\begin{align}
\label{err_dynamic_UAV2}
\begin{bmatrix}
\dot{\Phi}_j^{[2]}\\
\dot{\widetilde{\omega}}_{j}\\
\end{bmatrix}
=& 
\begin{bmatrix}
-K_j^{[2]}\big(I_3+F_j^{[2]}(F_j^{[2]})\t\big)\Phi_j^{[2]}\\
(\Phi_j^{[2]})\t K_j^{[2]}F_j^{[2]}
\end{bmatrix}\nonumber\\
&+
\begin{bmatrix}
-F_j^{[2]}\big((-c_j\sum_{k\in\mathcal N_j}(\widetilde{\omega}_{j}-\widetilde{\omega}_{k})\big)+\vartheta_j\\
-c_j\sum_{k\in\mathcal N_j}(\widetilde{\omega}_{j}-\widetilde{\omega}_{k})
\end{bmatrix}
\end{align}
with $\Phi_j^{[2]}:=[\phi_{j,x}, \phi_{j,y}, \phi_{j,z}]\t\in\mathbb{R}^3, F_j^{[2]}:=[\partial{f}_{j,x}, \partial{f}_{j,y},$ $ \partial{f}_{j,z}]\t\in\mathbb{R}^3, K_j^{[2]}=\mbox{diag}\{k_{j,x}, k_{j,y}, k_{j,z}\}\in\mathbb{R}^{3\times 3}$ and $\vartheta_j:=[\widetilde{p}_{j, x}, \widetilde{p}_{j, y}, \widetilde{p}_{j, z}]\t\in\mathbb{R}^3, j\in\mathcal V_2$.

Rewriting the states of USVs and UAVs in Eqs.~\eqref{err_dynamic_USV2} and~\eqref{err_dynamic_UAV2} in compact vectors below,
\begin{align}
\label{CDUS_vector}
\boldsymbol{\Phi}^{[1]}:=&[(\Phi_1^{[1]})\t, \dots, (\Phi_n^{[1]})\t ]\t\in\mathbb{R}^{2n}, \nonumber\\
\boldsymbol{\Phi}^{[2]}:=&[(\Phi_{n+1}^{[2]})\t,\dots, (\Phi_{n+m}^{[2]})\t ]\t\in\mathbb{R}^{3m}, \nonumber\\ 
\boldsymbol{K}^{[1]}:=&\mbox{diag}\{K_1^{[1]}, \dots, K_n^{[1]}\}\in\mathbb{R}^{2n\times 2n}, \nonumber\\ 
\boldsymbol{K}^{[2]}:=&\mbox{diag}\{K_{n+1}^{[2]}, \dots, K_{n+m}^{[2]}\}\in\mathbb{R}^{3m\times 3m}, \nonumber\\ 
\boldsymbol{F}^{[1]}:=&\mbox{diag}\{F_1^{[1]}, \dots, F_n^{[1]}\}\in\mathbb{R}^{2n\times 2n},\nonumber\\ 
\boldsymbol{F}^{[2]}:=&\mbox{diag}\{F_{n+1}^{[2]}, \dots, F_{n+m}^{[2]}\}\in\mathbb{R}^{3m\times 3m},\nonumber\\ 
\boldsymbol{\widetilde{\omega}}:=&[\widetilde{\omega}_{1}, \dots, \widetilde{\omega}_{n+m}]\t\in\mathbb{R}^{n+m},\nonumber\\ 
C:=&\mbox{diag}\{c_{1}, \dots, c_{n+m}\}\t\in\mathbb{R}^{(n+m)\times(n+m)},
\end{align}
we can pick a Lyapunov function candidate below,
\begin{align}
\label{func_V}
V(\boldsymbol{\Phi}^{[1]}, \boldsymbol{\Phi}^{[2]},\boldsymbol{\widetilde{\omega}})=&\frac{1}{2}\Big\{(\boldsymbol{\Phi}^{[1]})\t\boldsymbol{K}^{[1]}\boldsymbol{\Phi}^{[1]} \nonumber\\
															      &+ (\boldsymbol{\Phi}^{[2]})\t\boldsymbol{K}^{[2]}\boldsymbol{\Phi}^{[2]}+ \boldsymbol{\widetilde{\omega}}
															      	\t C \mathcal L\boldsymbol{\widetilde{\omega}}\Big\}, 
\end{align}
where $\mathcal L\in\mathbb{R}^{(n+m)\times(n+m)}$ is the connection matrix of the CDUS.
Taking the derivative of $V(\boldsymbol{\Phi}^{[1]}, \boldsymbol{\Phi}^{[2]},\boldsymbol{\widetilde{\omega}})$ in \eqref{func_V} yields
\begin{align}
\label{d_func_V}
\dot{V}=&(\boldsymbol{\Phi}^{[1]})\t\boldsymbol{K}^{[1]}\dot{\boldsymbol{\Phi}}^{[1]} + (\boldsymbol{\Phi}^{[2]})\t\boldsymbol{K}^{[2]}\dot{\boldsymbol{\Phi}}^{[2]}+ \boldsymbol{\widetilde{\omega}}
															      	\t C\mathcal L\dot{\boldsymbol{\widetilde{\omega}}}	.											
\end{align}
On one hand, using the definitions of $\boldsymbol{\Phi}^{[1]}, \boldsymbol{\Phi}^{[2]}$ in \eqref{CDUS_vector}, it then follows from the dynamics of $\boldsymbol{\Phi}^{[1]}, \boldsymbol{\Phi}^{[2]}$ in Eqs.~\eqref{err_dynamic_USV2},~\eqref{err_dynamic_UAV2} and \eqref{d_func_V} that
\begin{align}
\label{V_equation}
(\boldsymbol{\Phi}^{[1]})\t\boldsymbol{K}^{[1]}\dot{\boldsymbol{\Phi}}^{[1]}
														=&\sum_{i=1}^n\Big\{-(\Phi_i^{[1]})\t K_i^{[1]}K_i^{[1]}\Phi_i^{[1]}\nonumber\\
														&-(\Phi_i^{[1]})\t K_i^{[1]}F_i^{[1]}(F_i^{[1]})\t K_i^{[1]}\Phi_i^{[1]}\nonumber\\
														&+c_i(\Phi_i^{[1]})\t K_i^{[1]}F_i^{[1]}L_i\boldsymbol{\widetilde{\omega}}+(\Phi_i^{[1]})\t K_i^{[1]}\varepsilon_i\Big\},\nonumber\\
(\boldsymbol{\Phi}^{[2]})\t\boldsymbol{K}^{[2]}\dot{\boldsymbol{\Phi}}^{[2]}
														=&\sum_{j=n+1}^{n+m}\Big\{-(\Phi_j^{[2]})\t K_j^{[2]}K_j^{[2]}\Phi_j^{[2]}\nonumber\\
														&-(\Phi_j^{[2]})\t K_j^{[2]}F_j^{[2]}(F_j^{[2]})\t K_j^{[2]}\Phi_j^{[2]}\nonumber\\
														&+c_j(\Phi_j^{[2]})\t K_j^{[2]}F_j^{[2]}L_j\boldsymbol{\widetilde{\omega}}+(\Phi_j^{[2]})\t K_j^{[2]}\vartheta_j\Big\}	
\end{align}
with the $i$-th row $L_i$ and $j$-th row $L_j$ of the matrix $\mathcal L$ given in Section~\ref{Preliminary_D}.

On the other hand, it follows from $\dot{\boldsymbol{\widetilde{\omega}}}$ in Eqs.~\eqref{err_dynamic_USV2},~\eqref{err_dynamic_UAV2} that
\begin{align}
\label{V_equation1}
\boldsymbol{\widetilde{\omega}}\t C\mathcal {L}\dot{\boldsymbol{\widetilde{\omega}}}
														=&-\sum_{i=1}^{n+m}\|c_iL_i\boldsymbol{\widetilde{\omega}}\|^2 +\sum_{i=1}^{n}c_i (\Phi_i^{[1]})\t K_i^{[1]}F_i^{[1]}	L_i \boldsymbol{\widetilde{\omega}}	\nonumber\\
														   &+\sum_{j=n+1}^{n+m}c_j (\Phi_j^{[2]})\t K_j^{[2]}F_j^{[2]}L_j 	
														    \boldsymbol{\widetilde{\omega}}.									
\end{align}
Since
\begin{align}
\label{V_fact1}
&-\Big\|(\Phi_i^{[1]})\t K_i^{[1]}F_i^{[1]}\Big\|^2-\Big\|c_iL_i\boldsymbol{\widetilde{\omega}}\Big\|^2+2c_i(\Phi_i^{[1]})\t K_i^{[1]}F_i^{[1]}L_i\boldsymbol{\widetilde{\omega}}\nonumber\\
=&-\Big\|(\Phi_i^{[1]})\t K_i^{[1]}F_i^{[1]}-c_iL_i\boldsymbol{\widetilde{\omega}}\Big\|^2, i\in \mathcal V_1,\nonumber\\
&-\Big\|(\Phi_j^{[2]})\t K_j^{[2]}F_j^{[2]}\Big\|^2-\Big\|c_jL_j\boldsymbol{\widetilde{\omega}}\Big\|^2+2c_j(\Phi_j^{[1]})\t K_j^{[2]}F_j^{[2]}L_i\boldsymbol{\widetilde{\omega}}\nonumber\\
=&-\Big\|(\Phi_j^{[2]})\t K_j^{[2]}F_j^{[2]}-c_jL_j\boldsymbol{\widetilde{\omega}}\Big\|^2, j\in\mathcal V_2,
\end{align}
it follows from Eqs.~\eqref{V_equation}, \eqref{V_equation1} and \eqref{V_fact1} that $\dot{V}$ in Eqs.~\eqref{d_func_V}  can be rewritten 
\begin{align}
\label{d_func_V1}
\dot{V}=&\sum_{i=1}^n\Big\{-(\Phi_i^{[1]})\t K_i^{[1]}K_i^{[1]}\Phi_i^{[1]}+(\Phi_i^{[1]})\t K_i^{[1]}\varepsilon_i\nonumber\\
             &-\Big\|(\Phi_i^{[1]})\t K_i^{[1]}F_i^{[1]}-c_iL_i\boldsymbol{\widetilde{\omega}}\Big\|^2\Big\}\nonumber\\
             &+\sum_{j=n+1}^{n+m}\Big\{-(\Phi_j^{[2]})\t K_j^{[2]}K_j^{[2]}\Phi_j^{[2]}+(\Phi_j^{[2]})\t K_j^{[2]}\vartheta_j\nonumber\\
             &-\Big\|(\Phi_j^{[2]})\t K_j^{[2]}F_j^{[2]}-c_jL_j\boldsymbol{\widetilde{\omega}}\Big\|^2\Big\}.										
\end{align}
Since 
\begin{align}
\label{V_fact2}
&-(\Phi_i^{[1]})\t K_i^{[1]}K_i^{[1]}\Phi_i^{[1]}+(\Phi_i^{[1]})\t K_i^{[1]}\varepsilon_i\nonumber\\
\leq&-\frac{\Big\|K_i^{[1]}\Phi_i^{[1]}\Big\|^2}{2}+\frac{\|\varepsilon_i\|^2}{2}, i\in \mathcal V_1,\nonumber\\
&-(\Phi_j^{[2]})\t K_j^{[2]}K_j^{[2]}\Phi_j^{[2]}+(\Phi_j^{[2]})\t K_j^{[2]}\vartheta_j\nonumber\\
\leq&-\frac{\Big\|K_j^{[2]}\Phi_j^{[2]}\Big\|^2}{2}+\frac{\|\vartheta_j\|^2}{2}, j\in \mathcal V_2,
\end{align}
it follows from Eqs.~\eqref{d_func_V1}, \eqref{V_fact2} that
\begin{align}
\label{d_func_V2}
\dot{V}\leq&-\sum_{i=1}^n\alpha_i-\sum_{j=n+1}^{n+m}\beta_j+\gamma									
\end{align}
for 
\begin{align}
\label{replace_variable_1}
\alpha_i:=&\Big\|(\Phi_i^{[1]})\t K_i^{[1]}F_i^{[1]}-c_iL_i\boldsymbol{\widetilde{\omega}}\Big\|^2+\frac{1}{2}\Big\|K_i^{[1]}\Phi_i^{[1]}\Big\|^2,\nonumber\\
\beta_j:=&\Big\|(\Phi_j^{[2]})\t K_j^{[2]}F_j^{[2]}-c_jL_j\boldsymbol{\widetilde{\omega}}\Big\|^2+\frac{1}{2}\Big\|K_j^{[2]}\Phi_j^{[2]}\Big\|^2,\nonumber\\
\gamma:=&\sum_{i=1}^n\frac{\|\varepsilon_i\|^2}{2}+\sum_{j=n+1}^{n+m}\frac{\|\vartheta_j\|^2}{2}.
\end{align}
Due to the fact $\alpha_i\geq 0, \beta_j\geq 0, \forall i\in \mathcal V_1, j\in \mathcal V_2$ in Eq.~\eqref{replace_variable_1}, one has 
\begin{align*}
\label{integre_alpha_beta}
-\sum_{i=1}^n\int_0^t\alpha_i(s)ds-\sum_{j=n+1}^{n+m}\int_0^t\beta_j(s)ds\leq 0.
\end{align*}
Moreover, it follows from Lemma~\ref{lemma_signal_regulation} and $\varepsilon_i, \vartheta_j$ in Eqs.~\eqref{err_dynamic_USV2} and \eqref{err_dynamic_UAV2} that 
\begin{align}
\lim_{t\rightarrow\infty}\|\varepsilon_i(t)\|=0, \lim_{t\rightarrow\infty}\|\vartheta_j(t)\|=0
\end{align}
exponentially, which implies that $\gamma$ in Eq.~\eqref{replace_variable_1} satisfies 
\begin{align}
\label{conver_gamma}
\lim_{t\rightarrow\infty}\gamma(t)=0,
\end{align}
exponentially. Integrating $V(t)$ in Eq.~\eqref{d_func_V2} yields
\begin{align}
\label{bounded_V}
V(t)\leq&V(T)-\sum_{i=1}^n\int_0^t\alpha_i(s)ds-\sum_{j=n+1}^{n+m}\int_0^t\beta_j(s)ds\nonumber\\
		    &+\int_0^t\gamma(s)ds, \forall t\geq T.
\end{align}
From the fact that $\lim_{t\rightarrow\infty}\gamma(t)=0$ exponentially in~Eq.~\eqref{conver_gamma}, one has that $\int_0^t\gamma(s)ds$ is bounded, which then leads to Eq.~\eqref{integre_alpha_beta} and the boundedness of $V(T)$ (i.e., $V(t)$ in Eq.~\eqref{bounded_V} is bounded for $t\geq T$). So are error states $\Phi_i^{[1]}, \Phi_j^{[2]}, \omega_k, i\in\mathcal V_1, j\in\mathcal V_2, k\in\mathcal V$ in Eq.~\eqref{func_V}, which implies the divergence behavior has been prevented. The proof is thus completed.
\end{proof}

\begin{remark}
Since the 2D prescribed paths for the USVs implicitly contain the z-axis information of $\phi_{i,z}=0$, it thus does not influence the convergence to the coordinated navigation of the CDUS. Moreover, by introducing $\phi_{i,z}=0$, the derivative of the desired virtual coordinate of USV $i, i\in\mathcal V_1,$ is the same as the one of UAV $j, j\in\mathcal V_2$ (i.e., $\dot{\omega}_i^{\ast}=\dot{\omega}_j^{\ast}=(-1)^3, i\in\mathcal V_1, j\in\mathcal V_2$ in Eqs.~\eqref{derivative_omega_USV} and~\eqref{err_virc_UAV}), which can thus be utilized to technically guarantee the path convergence and maneuvering, and heterogeneous coordination in Lemmas~\ref{lemma_convergence_behavior} and~\ref{lemma_coordination_behavior} later.
\end{remark}

\begin{lemma}
\label{lemma_convergence_behavior}
Under Assumptions~\ref{assp_derivative}, all vehicles in the CDUS composed of  \eqref{USV_kinetic}, \eqref{UAV_kinetic}, \eqref{control_law_USV}, \eqref{control_law_UAV} approach and maneuver along the prescribed paths, i.e., 
$\lim_{t\rightarrow\infty}\Phi_{i}^{[1]}(t)=\mathbf{0}_2, \lim_{t\rightarrow\infty}\Phi_{j}^{[2]}(t)=\mathbf{0}_3, i\in\mathcal V_1, j\in\mathcal V_2$, and $\lim_{t\rightarrow\infty}$ $\dot{\omega}_i(t)=\dot{\omega}_i^{\ast}\neq0, i\in\mathcal V$.
\end{lemma}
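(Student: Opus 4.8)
The plan is to upgrade the boundedness conclusion of Lemma~\ref{finite_time_behavior} to asymptotic convergence by applying Barbalat's lemma to the very same Lyapunov function $V$ in \eqref{func_V}. Recall that the analysis culminating in \eqref{d_func_V2} already yields $\dot V \leq -\sum_{i=1}^{n}\alpha_i - \sum_{j=n+1}^{n+m}\beta_j + \gamma$, where $\alpha_i,\beta_j\geq 0$ are the nonnegative quantities in \eqref{replace_variable_1} and $\gamma(t)\to 0$ exponentially by \eqref{conver_gamma}. First I would integrate this inequality from $0$ to $t$; since $V$ is bounded below (inherited from the coercivity of \eqref{func_V} used in Lemma~\ref{finite_time_behavior}) and $\int_0^{\infty}\gamma(s)\,ds<\infty$, the partial integrals $\int_0^t\big(\sum_i\alpha_i+\sum_j\beta_j\big)\,ds$ stay bounded for all $t$, so that $W:=\sum_i\alpha_i+\sum_j\beta_j\in L_1[0,\infty)$.

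To invoke Barbalat's lemma I then need $W$ to be uniformly continuous, which I would secure by showing $\dot W$ is bounded. Here Lemma~\ref{finite_time_behavior} furnishes bounded $\Phi_i^{[1]},\Phi_j^{[2]},\widetilde{\omega}_k$; Assumption~\ref{assp_derivative} furnishes bounded $F_i^{[1]},F_j^{[2]}$ together with bounded time derivatives $\dot F_i^{[1]},\dot F_j^{[2]}$ (since these involve the bounded second derivatives of $f$ multiplied by the bounded $\dot\omega_i$); and Lemma~\ref{lemma_signal_regulation} furnishes bounded $\varepsilon_i,\vartheta_j$. Substituting these facts into the closed-loop dynamics \eqref{err_dynamic_USV2} and \eqref{err_dynamic_UAV2} shows that $\dot\Phi_i^{[1]},\dot\Phi_j^{[2]},\dot{\widetilde{\omega}}_i$ are bounded, hence so is $\dot W$. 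Barbalat's lemma then gives $W(t)\to 0$, and as each summand is nonnegative, every $\alpha_i\to 0$ and $\beta_j\to 0$.

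From $\alpha_i\to 0$ I read off $\tfrac12\|K_i^{[1]}\Phi_i^{[1]}\|^2\to 0$, and since $K_i^{[1]}$ is a positive-definite diagonal matrix this forces $\Phi_i^{[1]}\to\mathbf 0_2$; identically $\beta_j\to 0$ gives $\Phi_j^{[2]}\to\mathbf 0_3$, which is exactly Objective~1 (path convergence). For Objective~2 (path maneuvering) I would exploit the remaining nonnegative piece of $\alpha_i$, namely $\big\|(\Phi_i^{[1]})\t K_i^{[1]}F_i^{[1]}-c_iL_i\boldsymbol{\widetilde{\omega}}\big\|\to 0$. Because $\Phi_i^{[1]}\to\mathbf 0_2$ while $F_i^{[1]}$ stays bounded, the first term vanishes, so the consensus term $c_iL_i\boldsymbol{\widetilde{\omega}}\to 0$. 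Feeding both limits into the $\dot{\widetilde{\omega}}_i$ equation of \eqref{err_dynamic_USV2} yields $\dot{\widetilde{\omega}}_i\to 0$, i.e.\ $\dot\omega_i\to\dot\omega_i^{\ast}=(-1)^3\neq 0$; the UAV case is handled identically through \eqref{err_dynamic_UAV2}.

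The step I expect to be the main obstacle is the uniform-continuity verification: membership of $W$ in $L_1$ alone does not imply $W\to 0$, so the whole argument rests on bounding $\dot W$, and this is precisely where Assumption~\ref{assp_derivative} is indispensable, since it controls not merely $F$ but also $\dot F$ via the bounded second derivatives of $f$ together with the already-established boundedness of $\dot\omega_i$. A secondary point I would confirm is that $V$ is genuinely bounded below, i.e.\ that $\boldsymbol{\widetilde{\omega}}\t C\mathcal L\boldsymbol{\widetilde{\omega}}$ contributes nonnegatively under Assumption~\ref{assu_topology}, so that the passage to $W\in L_1$ is legitimate; this property is inherited from the construction of $V$ in Lemma~\ref{finite_time_behavior}.
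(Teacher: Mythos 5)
Your proposal is correct and follows essentially the same route as the paper: both establish that $\Xi:=\sum_i\alpha_i+\sum_j\beta_j$ has a bounded integral from the integrated dissipation inequality, verify uniform continuity by bounding $\dot\Xi$ via Assumption~\ref{assp_derivative} and the boundedness supplied by Lemma~\ref{finite_time_behavior}, apply Barbalat's lemma to get $\Xi\to 0$, and then read off path convergence from the $\|K_i^{[1]}\Phi_i^{[1]}\|^2$ terms and path maneuvering from the vanishing of $(\Phi_i^{[1]})\t K_i^{[1]}F_i^{[1]}-c_iL_i\boldsymbol{\widetilde{\omega}}$ substituted into the $\dot{\widetilde{\omega}}_i$ dynamics. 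Your explicit attention to the uniform-continuity step and to the lower-boundedness of $V$ merely makes precise what the paper's proof uses implicitly.
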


\begin{proof}
According to \eqref{bounded_V} in Lemma~\ref{finite_time_behavior}, one has that $V(t)$ is bounded, which then follows from Eqs.~\eqref{err_dynamic_USV2},~\eqref{err_dynamic_UAV2}, \eqref{func_V} that $\Phi_i^{[1]}, \dot{\Phi}_i^{[1]}, \Phi_j^{[2]}, \dot{\Phi}_j^{[2]}, \boldsymbol{\widetilde{\omega}}, \dot{\boldsymbol{\widetilde{\omega}}}, i\in \mathcal V_1, j\in\mathcal V_2$ are all bounded as well.

Denoting $\Xi:=\sum_{i=1}^n\alpha_i+\sum_{j=n+1}^{n+m}\beta_j$ and combining with Eq.~\eqref{bounded_V} yields
\begin{align}
\label{bounded_V1}
\int_T^t\Xi(s)ds \leq&V(T)-V(t)+\int_0^t\gamma(s)ds, \forall t\geq T,
\end{align}
which implies that $\int_T^t\Xi(s)ds$ is upper bounded. From Assumption~\ref{assp_derivative}, one has that $\dot{F}_i^{[1]}, \dot{F}_j^{[2]}$ are bounded, so are the values of $(\dot{\Phi}_i^{[1]})\t K_i^{[1]}F_i^{[1]}, (\dot{\Phi}_j^{[2]})\t K_j^{[2]}F_j^{[2]}, (\Phi_i^{[1]})\t K_i^{[1]}\dot{F}_i^{[1]}$, $(\Phi_j^{[2]})\t K_j^{[2]}\dot{F}_j^{[2]}$, which implies that $\dot{\Xi}$ is bounded as well. Using Barbalat's lemma~\cite{khalil2002nonlinear}, one has that 
\begin{align*}
\lim_{t\rightarrow\infty} \Xi(t)=0,
\end{align*}
which then follows from \eqref{replace_variable_1} that $\lim_{t\rightarrow\infty}\Phi_i^{[1]}(t)=\mathbf{0}_2, \lim_{t\rightarrow\infty}\Phi_j^{[2]}(t)=\mathbf{0}_3, i\in\mathcal V_1, j\in\mathcal V_2.$
The path convergence (i.e., Objective 1 in Definition~\ref{CPF_definition}) is thus proved. Moreover, we also have
\begin{align}
\label{alpha_convergence_1}
&\lim_{t\rightarrow\infty}(\Phi_i^{[1]})\t(t) K_i^{[1]}F_i^{[1]}(t)-c_iL_i\boldsymbol{\widetilde{\omega}}(t)=0, i\in\mathcal V_1,\nonumber\\
&\lim_{t\rightarrow\infty}(\Phi_j^{[2]})\t(t) K_j^{[2]}F_j^{[2]}(t)-c_jL_j\boldsymbol{\widetilde{\omega}}(t)=0,j\in\mathcal V_2,
\end{align}
with the $i$-th row  $L_i$ and $j$-th row $L_j$ of $\mathcal L$ given in Section~\ref{Preliminary_D}.
It follows from Eqs.~\eqref{err_dynamic_USV2}, \eqref{err_dynamic_UAV2} and \eqref{alpha_convergence_1} that 
\begin{align}
\label{derivative_omega_convege}
\lim_{t\rightarrow\infty}\dot{\widetilde{\omega}}_i=\lim_{t\rightarrow\infty}\dot{\widetilde{\omega}}_j=0, i\neq j\in\mathcal V.
\end{align}
Since $\dot{\omega}_i^{\ast}=\dot{\omega}_j^{\ast}=(-1)^3, i\in\mathcal V_1, j\in\mathcal V_2,$ in \eqref{derivative_omega_USV} and \eqref{derivative_omega_UAV}, one has
\begin{align*}
\lim_{t\rightarrow\infty}\dot{\omega}_i(t)=\dot{\omega}_i^{\ast}\neq0, \forall i\in\mathcal V,
\end{align*}
which implies the path maneuvering (i.e., Objective 2 in Definition~\ref{CPF_definition}) is proved as well. The proof of Lemma~\ref{lemma_convergence_behavior} is thus completed. 
\end{proof}

\begin{lemma}
\label{lemma_coordination_behavior}
Under Assumptions~\ref{assu_topology}, all vehicles in the CDUS composed of  \eqref{USV_kinetic}, \eqref{UAV_kinetic}, \eqref{control_law_USV}, \eqref{control_law_UAV} guarantee the heterogeneous coordination via the interaction of relative virtual coordinates, i.e.,  $\lim_{t\rightarrow\infty} \omega_i(t)-\omega_j(t)=\Delta_{i,j}, j\in\mathcal N_i, i\in \mathcal V$.
\end{lemma}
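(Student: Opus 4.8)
The plan is to reduce Objective~3 of Definition~\ref{CPF_definition} to an asymptotic consensus statement about the shifted virtual coordinates $\boldsymbol{\widetilde{\omega}}:=[\widetilde{\omega}_{1},\dots,\widetilde{\omega}_{n+m}]\t$ defined in Eqs.~\eqref{err_virc_USV} and~\eqref{err_virc_UAV}, and then to exploit the spectral structure of $\mathcal L$ under the spanning-tree Assumption~\ref{assu_topology}. The natural starting point is the limit~\eqref{alpha_convergence_1} already produced inside the proof of Lemma~\ref{lemma_convergence_behavior}, namely $(\Phi_i^{[1]})\t K_i^{[1]}F_i^{[1]}-c_iL_i\boldsymbol{\widetilde{\omega}}\to 0$ together with its UAV counterpart. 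Since Lemma~\ref{lemma_convergence_behavior} gives $\Phi_i^{[1]}\to\mathbf{0}_{2}$ and $\Phi_j^{[2]}\to\mathbf{0}_{3}$, while $F_i^{[1]},F_j^{[2]}$ are bounded by Assumption~\ref{assp_derivative}, the terms $(\Phi_i^{[1]})\t K_i^{[1]}F_i^{[1]}$ and $(\Phi_j^{[2]})\t K_j^{[2]}F_j^{[2]}$ vanish. Because each $c_i>0$, this forces $L_i\boldsymbol{\widetilde{\omega}}(t)\to 0$ for every $i\in\mathcal V$, i.e. $\lim_{t\to\infty}\mathcal L\boldsymbol{\widetilde{\omega}}(t)=\mathbf{0}_{n+m}$.

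Next I would record the spectral facts about $\mathcal L$. Under Assumption~\ref{assu_topology} the directed graph contains a spanning tree, so $\mathcal L$ has a simple eigenvalue at $0$ with right eigenvector $\mathbf{1}_{n+m}$ and all remaining eigenvalues strictly in the open right half-plane. This induces an $\mathcal L$-invariant direct-sum decomposition $\mathbb R^{n+m}=\mbox{span}(\mathbf{1}_{n+m})\oplus W$ on which the restriction $\mathcal L|_{W}$ is invertible. Writing $\boldsymbol{\widetilde{\omega}}(t)=a(t)\mathbf{1}_{n+m}+\boldsymbol w(t)$ with $\boldsymbol w(t)\in W$, and using $\mathcal L\mathbf{1}_{n+m}=\mathbf{0}_{n+m}$, one gets $\mathcal L\boldsymbol{\widetilde{\omega}}=\mathcal L\boldsymbol w$, whence $\|\boldsymbol w(t)\|\le\|(\mathcal L|_{W})^{-1}\|\,\|\mathcal L\boldsymbol{\widetilde{\omega}}(t)\|\to 0$. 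Consequently $\boldsymbol{\widetilde{\omega}}(t)-a(t)\mathbf{1}_{n+m}\to\mathbf{0}_{n+m}$, which is exactly asymptotic consensus of the shifted coordinates, $\widetilde{\omega}_i(t)-\widetilde{\omega}_j(t)\to 0$ for every pair $i,j\in\mathcal V$.

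Finally I would translate this back to the original coordinates. By the construction of the reference coordinate in the remark following~\eqref{derivative_omega_USV}, the desired coordinates satisfy $\dot{\omega}_i^{\ast}=\dot{\omega}_j^{\ast}=(-1)^3$ together with $\omega_i^{\ast}-\omega_j^{\ast}=\Delta_{i,j}$ for $j\in\mathcal N_i$; hence $\omega_i-\omega_j-\Delta_{i,j}=\widetilde{\omega}_i-\widetilde{\omega}_j$. Combining this identity with the consensus just obtained yields $\lim_{t\to\infty}\big(\omega_i(t)-\omega_j(t)\big)=\Delta_{i,j}$ for all $j\in\mathcal N_i$, $i\in\mathcal V$, establishing Objective~3 of Definition~\ref{CPF_definition}.

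I expect the crux to be the middle step, passing from $\mathcal L\boldsymbol{\widetilde{\omega}}\to\mathbf{0}_{n+m}$ to consensus of $\boldsymbol{\widetilde{\omega}}$. Because the communication graph is only directed, $\mathcal L$ is in general non-symmetric, so one cannot simply project onto an orthogonal complement of $\mathbf{1}_{n+m}$; the argument must instead rely on the simple-zero-eigenvalue structure and the invertibility of $\mathcal L|_{W}$ (equivalently, a left-eigenvector weighted-average reduction). A secondary subtlety is that $\boldsymbol{\widetilde{\omega}}$ itself need not be a priori bounded, since only its disagreement part $\boldsymbol w$ is controlled by $\|\mathcal L\boldsymbol{\widetilde{\omega}}\|$; this is harmless here because only the differences $\widetilde{\omega}_i-\widetilde{\omega}_j$, hence $\boldsymbol w$, enter the coordination objective.
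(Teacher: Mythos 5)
Your proposal is correct and follows the same overall route as the paper: both start from the limit \eqref{alpha_convergence_1} established in Lemma~\ref{lemma_convergence_behavior} to deduce $\mathcal L\boldsymbol{\widetilde{\omega}}(t)\to\mathbf{0}_{n+m}$, invoke the spanning-tree spectral property of $\mathcal L$ under Assumption~\ref{assu_topology}, conclude consensus of the shifted coordinates, and translate back via $\omega_i^{\ast}-\omega_j^{\ast}=\Delta_{i,j}$. The one place you genuinely improve on the paper is the middle step you correctly identify as the crux: the paper argues that since $\mathcal L\boldsymbol{\widetilde{\omega}}\to\mathbf{0}_{n+m}$, the vector $\boldsymbol{\widetilde{\omega}}$ \emph{is} an eigenvector of $\mathcal L$ for the eigenvalue $0$ and hence proportional to $\mathbf{1}_{n+m}$, which conflates an asymptotic statement with an exact algebraic one and tacitly assumes $\boldsymbol{\widetilde{\omega}}(t)$ settles into the kernel. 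Your decomposition $\mathbb{R}^{n+m}=\mbox{span}(\mathbf{1}_{n+m})\oplus W$ with $\mathcal L|_{W}$ invertible (valid because the zero eigenvalue is simple under the spanning-tree condition), together with the bound $\|\boldsymbol{w}(t)\|\leq\|(\mathcal L|_{W})^{-1}\|\,\|\mathcal L\boldsymbol{\widetilde{\omega}}(t)\|$, turns this into a clean quantitative argument and also correctly sidesteps the orthogonal-projection shortcut that would fail for a non-symmetric directed Laplacian. Your closing observation that only the disagreement component $\boldsymbol{w}$, not $\boldsymbol{\widetilde{\omega}}$ itself, needs to be controlled is likewise accurate and is not made explicit in the paper.
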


\begin{proof}

On one hand, it follows from Eq.~\eqref{alpha_convergence_1} in Lemma~\ref{lemma_convergence_behavior} that $\lim_{t\rightarrow\infty}c_iL_i\boldsymbol{\widetilde{\omega}}(t)=0, \lim_{t\rightarrow\infty}c_jL_j\boldsymbol{\widetilde{\omega}}(t)=0,  i\in\mathcal V_1, j\in\mathcal V_2$, i.e., $\lim_{t\rightarrow\infty} \mathcal L\boldsymbol{\widetilde{\omega}}(t)=0 \boldsymbol{\widetilde{\omega}}=\mathbf{0}_{n+m}$,
which implies that $0$ is one of the eigenvalues of the matrix $\mathcal L$, and the vector $\boldsymbol{\widetilde{\omega}}$ is the corresponding eigenvector for $\lambda(\mathcal L)=0$.

On the other hand, recalling Assumption~\ref{assu_topology}, if $\mathcal G (\mathcal V, \mathcal E)$ contains a directed spanning tree, the matrix $\mathcal L$ contains an eigenvalue $0$ with the corresponding eigenvector $\mathbf{1}_{n+m}:=[1, \cdots, 1]\t\in\mathbb{R}^{n+m}$~\cite{ren2005consensus}.

Combining the aforementioned two claims together, one has that the vector $\boldsymbol{\widetilde{\omega}}$ is proportional to the vector $\mathbf{1}_{n+m}$,
which implies that for arbitrary two vehicles $i, j, i\neq j$, the virtual coordinate errors $\widetilde{\omega}_i, \widetilde{\omega}_j$  satisfy
\begin{align}
\label{virtual_coordinate_converge}
\lim_{t\rightarrow\infty} \widetilde{\omega}_i(t)-\widetilde{\omega}_j(t)=0, \forall i\neq j\in\mathcal V.
\end{align}
From the fact $\dot{\omega}_{i}^{\ast}=(-1)^3, i\in\mathcal V$ in \eqref{derivative_omega_USV} and \eqref{derivative_omega_UAV}, one has that the relative value between the desired virtual coordinates $\omega_i^{\ast}, \omega_j^{\ast}$ maintains a constant $\Delta_{i,j}$ all along, i.e., 
\begin{align}
\label{desired_relative_omega_val}
\omega_i^{\ast}(t)-\omega_j^{\ast}(t)=\Delta_{i,j}, \forall t>0, \forall i\neq j\in\mathcal V.
\end{align}
Then, it follows from Eqs~\eqref{virtual_coordinate_converge}, \eqref{desired_relative_omega_val} that
 \begin{align}
&\lim_{t\rightarrow\infty}\{\omega_i(t)-\omega_j(t)\}\nonumber\\
=&\lim_{t\rightarrow\infty}\{\widetilde{\omega}_i(t)-\widetilde{\omega}_j(t)+\omega_i^{\ast}(t)-\omega_j^{\ast}(t)\}\nonumber\\
=&\Delta_{i,j}, j\in\mathcal N_i, i\in \mathcal V.
\end{align}
The proof of Lemma~\ref{lemma_coordination_behavior} is thus completed.
\end{proof}

\begin{theorem}
\label{theorem_CDUS}
A CDUS composed of  \eqref{USV_kinetic}, \eqref{UAV_kinetic}, \eqref{control_law_USV}, \eqref{control_law_UAV} can achieve the coordinated navigation, i.e., both Problems~\ref{problem_1}-\ref{problem_2} are solved.
\end{theorem}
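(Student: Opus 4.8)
The plan is to prove Theorem~\ref{theorem_CDUS} by assembling the four preceding lemmas according to the hierarchical decoupling introduced in Problems~\ref{problem_1} and~\ref{problem_2}. Since coordinated navigation is \emph{defined} to be achieved once both problems are solved, and since Problem~\ref{problem_2} is posed conditionally on Problem~\ref{problem_1}, the proof is essentially a matter of verifying that the established intermediate results cover every requirement of Definition~\ref{CPF_definition} and then chaining them in the correct dependency order, rather than of fresh computation.

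First I would dispatch the lower level. By Lemma~\ref{lemma_signal_regulation}, the regulated control laws \eqref{regulated_law_USV}, \eqref{regulated_law_USV2} for the USVs and \eqref{regulated_law_UAV} for the UAVs drive the guidance-tracking errors $\widetilde{u}_i, \widetilde{v}_i, \widetilde{p}_{j,x}, \widetilde{p}_{j,y}, \widetilde{p}_{j,z}$ to zero exponentially, which is precisely condition~\eqref{conver_condi_1}. Hence Problem~\ref{problem_1} is solved, and the hypothesis required to invoke the upper-level results is in force.

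Next I would treat the upper level, keeping in mind that the exponentially vanishing tracking errors enter the closed-loop systems~\eqref{err_dynamic_USV2} and~\eqref{err_dynamic_UAV2} as the perturbation terms $\varepsilon_i, \vartheta_j$. The dependency chain runs as follows: Lemma~\ref{finite_time_behavior} first guarantees boundedness of all error states $\boldsymbol{\Phi}^{[1]}, \boldsymbol{\Phi}^{[2]}, \boldsymbol{\widetilde{\omega}}$ for all time, ruling out divergence; building on this boundedness, Lemma~\ref{lemma_convergence_behavior} supplies Objectives~1 and~2 of Definition~\ref{CPF_definition} (path convergence $\Phi_i^{[1]}\to\mathbf{0}_2$, $\Phi_j^{[2]}\to\mathbf{0}_3$ and path maneuvering $\dot{\omega}_i\to\dot{\omega}_i^{\ast}\neq 0$) via Barbalat's lemma; and Lemma~\ref{lemma_coordination_behavior}, using the consensus structure of $\mathcal L$ under Assumption~\ref{assu_topology}, supplies Objective~3 (relative virtual-coordinate synchronization $\omega_i-\omega_j\to\Delta_{i,j}$). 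I would therefore invoke these three lemmas in sequence, each presupposing the conclusion of its predecessor.

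Since all three objectives of Definition~\ref{CPF_definition} are met under the stated assumptions, Problem~\ref{problem_2} is solved once Problem~\ref{problem_1} is solved; combining the two conclusions yields the coordinated navigation and completes the proof. The step I expect to require the most care is not a calculation but the bookkeeping of the dependency structure — ensuring that the perturbation terms carried into the upper-level dynamics are genuinely the same $\varepsilon_i, \vartheta_j$ that Lemmas~\ref{finite_time_behavior} and~\ref{lemma_convergence_behavior} assume to vanish, so that the conditional hypotheses of the upper-level lemmas are legitimately discharged by Lemma~\ref{lemma_signal_regulation} rather than merely presumed.
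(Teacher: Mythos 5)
Your proposal is correct and matches the paper's argument, which simply cites Lemmas~\ref{lemma_signal_regulation}--\ref{lemma_coordination_behavior} in exactly the dependency order you describe; you merely spell out the chaining (lower-level regulation first, then boundedness, convergence/maneuvering, and coordination) that the paper leaves implicit.
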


\begin{proof}
The conclusion is drawn from Lemmas~\ref{lemma_signal_regulation}-\ref{lemma_coordination_behavior} directly.
\end{proof}

\begin{remark}
For the bounded disturbances $d_i$ encountered in USV and UAV kinematics, there exist various works focusing on the disturbance observer $\widehat{d}_i$ for the compensation of $d_i$, such as the extended state observers (in abbr., ESO) and sliding mode observers (in abbr., SMO), which can estimate the disturbances in finite time~(refers to \cite{peng2020output}), i.e., $\lim_{t\rightarrow T_2}\{\widehat{d}_i(t)-d_i(t)\}=0$ with a constant time $T_2>0$. The design of such disturbance observers  has been seamlessly embedded in the low-level signal tracking regulation in Problem~\ref{problem_1}, which is less relevant in the present study  because the proposed DGVF controllers \eqref{control_law_USV} and \eqref{control_law_UAV} focus on the
upper-level desired guidance velocities in Problem~\ref{problem_2} instead.
\end{remark}

\begin{figure}[!htb]
\centering
\includegraphics[width=\hsize]{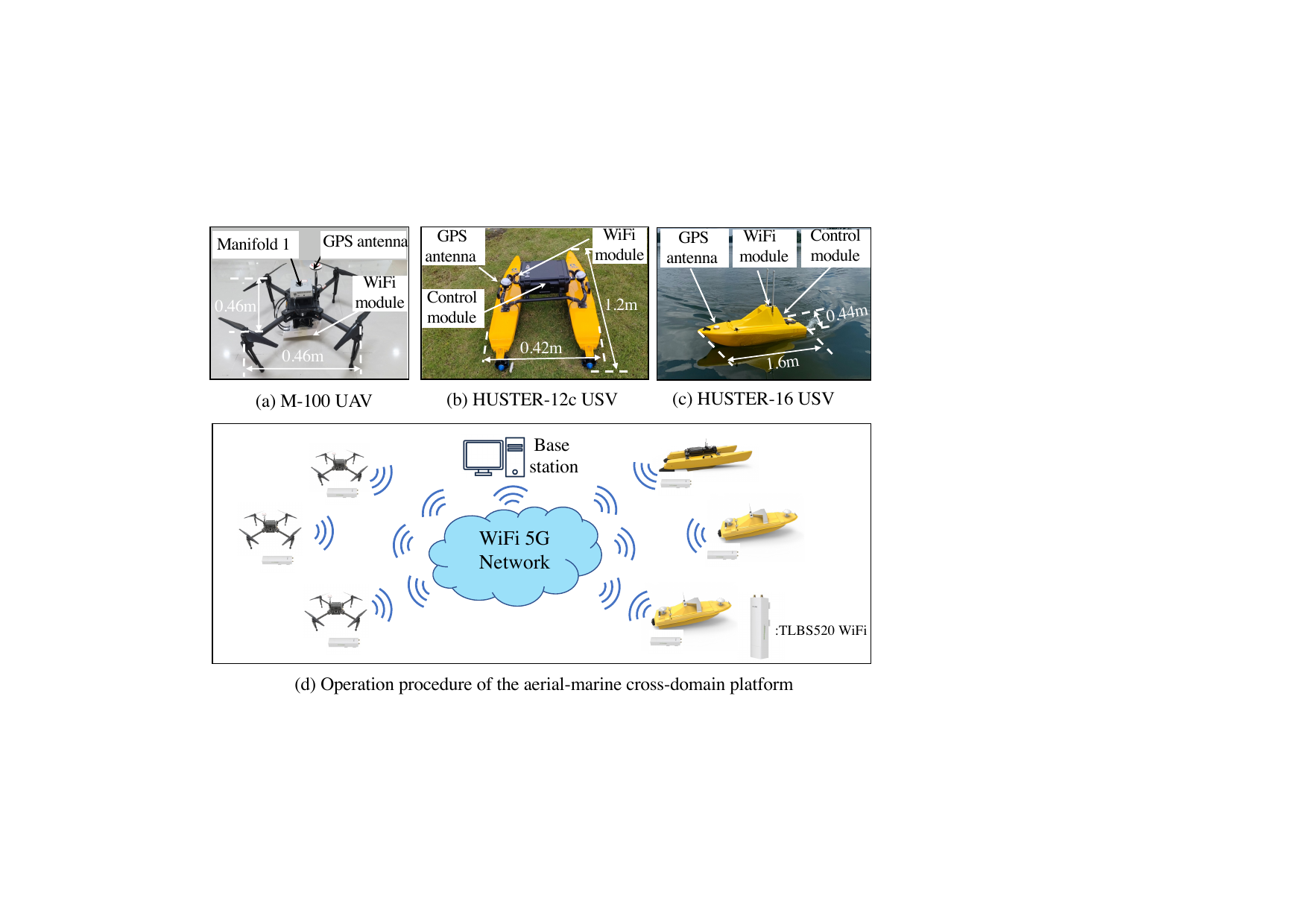}
\caption{Subfigures (a)-(c): M-100 UAV, HUSTER-12c USV, HUSTER-16 USV and the detailed components. Subfigure (d): The operation procedure of the cross-domain heterogeneous system, which consists of three M-100 UAVs, two HUSTER-16 USV, a HUSTER-12c USV and a WiFi 5G (TP-link TLBS520) wireless communication station.}
\label{CDUS_platform}
\end{figure}
\begin{figure}[!htb]
\centering
\includegraphics[width=7.9cm]{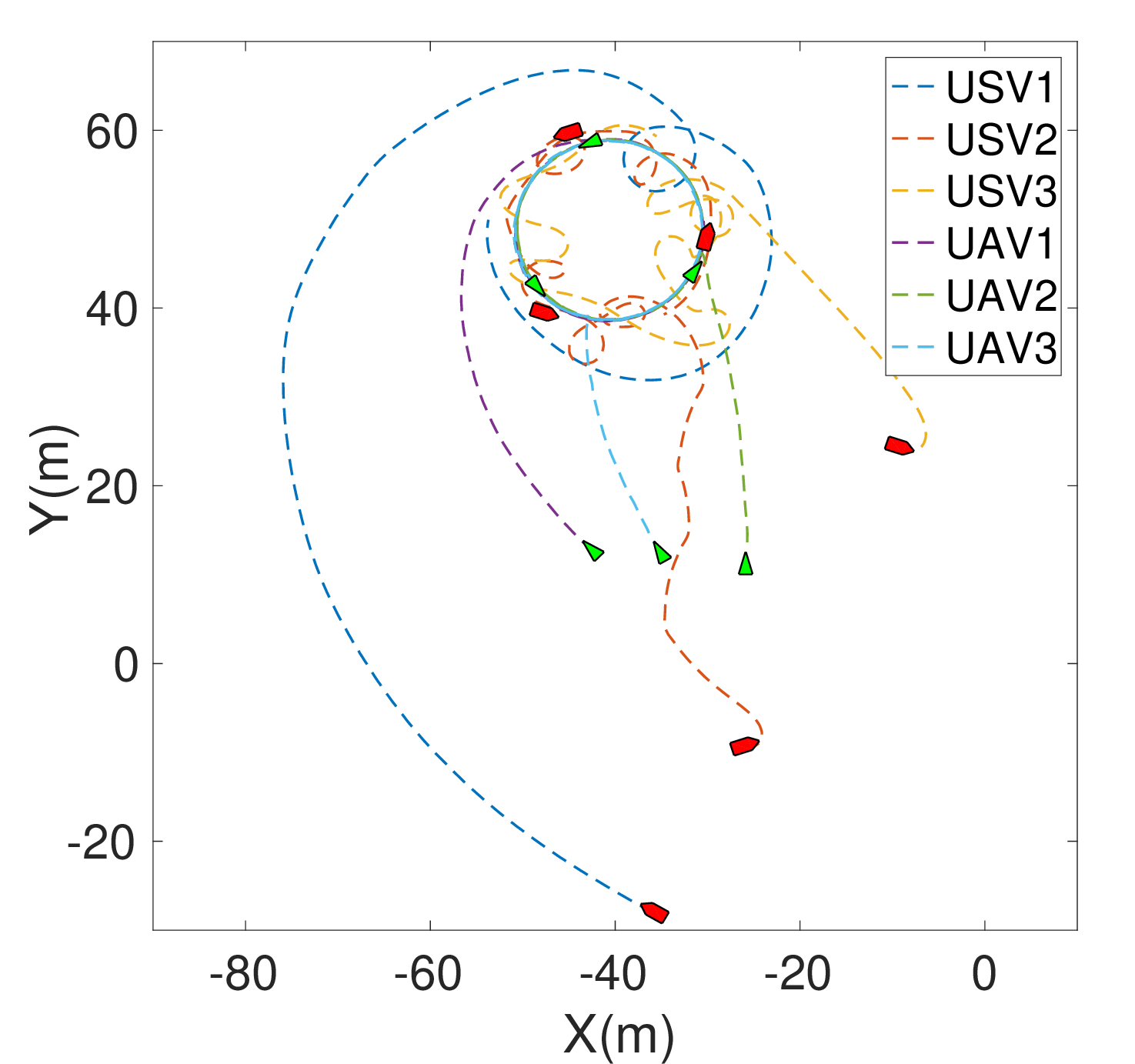}
\caption{Trajectories of the CDUS consisting of three heterogeneous USVs and three UAVs from random initial
positions to the coordinated navigation moving along a prescribed circular path from the top view. (Here, the vessel shapes denote the USVs, and the triangles the UAVs. The dashed lines are the trajectories.)}
\label{experiment_trajectory}
\end{figure}

\begin{remark}
\label{re_proximity}
The high proximity between the vehicles refers to the problem of inter-vehicle collision avoidance, which is important for real experiments. In this paper, we focus on the design of the DGVF controllers to achieve the coordination navigation of the CDUS and thus do not elaborate on details of the collision-avoidance issues. However, the designed DGVF controllers in Eqs.~\eqref{control_law_USV} and \eqref{control_law_UAV} have been seamlessly incorporated with the existing collision avoidance algorithms, such as quadratic programming (in abbr., QP) with the control barrier functions (in abbr., CBFs)~\cite{wang2017safety}. Specifically, let $u_{i,f}\in\mathbb{R}, v_{i,f}\in\mathbb{R}$ and $p_{j, x}^f\in\mathbb{R}, p_{j, y}^f\in\mathbb{R}, p_{j, z}^f\in\mathbb{R}$ be the final guidance signals of USV $i, i\in\mathcal V_1$, and UAV $j, j\in\mathcal V_2$, respectively, one can formulate the QP for the CDUS to be $\min\{(u_{i,f}-u_{i,r})^2+(v_{i,f}-v_{i,r})^2\}$ subject to $\dot{\phi}_{i,k}(q_i, q_k)\leq\phi_{i,k}(q_i, q_k), i\neq k \in\mathcal V_1$, and $\min\{(p_{j,x}^f-p_{j,x}^r)^2+(p_{j,y}^f-p_{j,y}^r)^2+(p_{j,z}^f-p_{j,z}^r)^2\}$ subject to $\dot{\phi}_{j,l}(q_j, q_l)\leq\phi_{j,l}(q_j, q_l), j\neq l \in\mathcal V_2$. Here, $\phi_{i,k}=\|q_i-q_k\|^2- R^2$ and $\phi_{j,l}=\|q_j-q_l\|^2- R^2$ are the designed CBFs with a safe radius $R\in\mathbb{R}^+$, $u_i^r, v_i^r, p_{j,x}^r, p_{j,y}^r, p_{j,z}^r$ are the DGVF controllers in Eqs.~\eqref{control_law_USV} and \eqref{control_law_UAV}, and $q_i, q_k, q_j, q_l$ are positions of USVs and UAVs given in Eqs.~\eqref{USV_kinetic} and \eqref{err_phi_USV}, respectively. For the rigorous technical analysis with additional inter-vehicle collision avoidances, it will be investigated in the future work. 
\end{remark}

\begin{figure}[!htb]
\centering
\includegraphics[width=7.45cm]{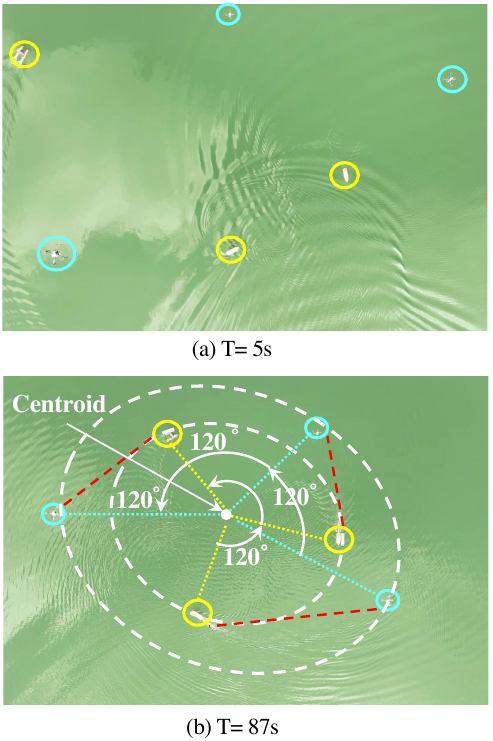}
\caption{(a) Experimental snapshot to cover all UAVs and USVs at $T=5s$. (b) Experimental snapshot of the prescribed circular-path navigation at $T=87s$. Note that the two-size circles 
are caused by different heights of the UAVs and USVs, which are in the same radius $r=10$m, and the red dashed lines denote that the corresponding USVs and UAVs are with the same x-axis and y-axis positions, but with a different height from the top view. (Here, the yellow circles are the USVs, and the blue circles the UAVs.) }
\label{experiments_snapshot}
\end{figure}

\begin{figure}[!htb]
\centering
\includegraphics[width=\hsize]{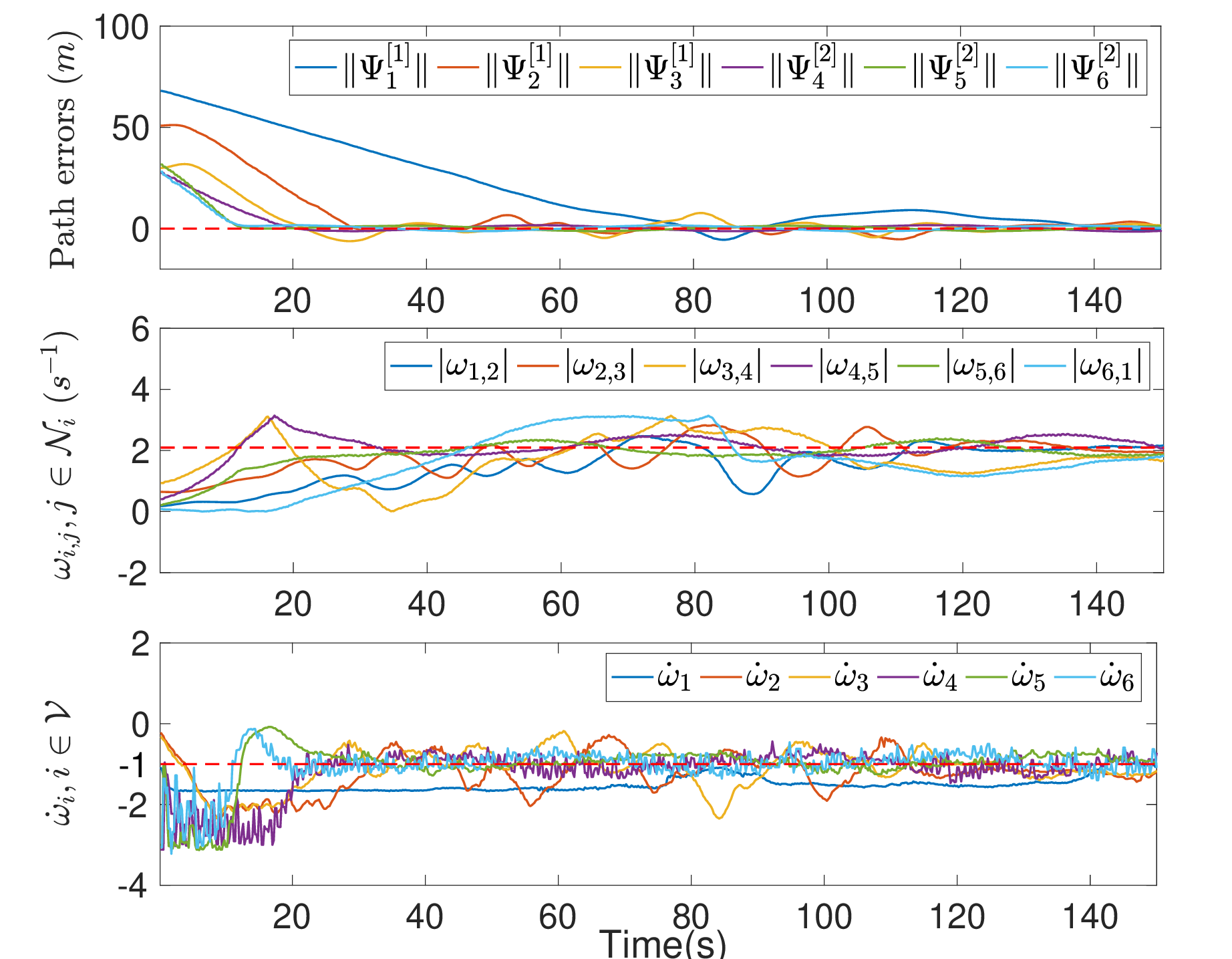}
\caption{Temporal evolution of the path-following errors $\|\Phi_i^{[1]}\|, i=1,2,$ $3,\|\Phi_j^{[2]}\|, j=4,5,6$, the inter-vehicle parametric parameters $|\omega_{1,2}|, |\omega_{2,3}|, $ $|\omega_{3,4}|, |\omega_{4,5}|, |\omega_{5,6}|, |\omega_{6,1}|$, and the derivative of the virtual coordinates $\omega_i, i=1,2,3,4,5,6$.}
\label{experiment_errors}
\end{figure}

\section{Experimental results}
In this section, we demonstrate the effectiveness of the DGVF controller in Eqs.~\eqref{control_law_USV} and \eqref{control_law_UAV} by real-lake experiments and numerical simulations.

\subsection{Cross-Domain Heterogeneous Platform}
In this subsection, we first introduce a self-established cross-domain heterogeneous platform, which consists of six unmanned vehicles, namely, three M-100 UAVs, two HUSTER-16 USVs, a HUSTER-12C USV, and a WiFi 5G wireless communication station. As shown in Fig.~\ref{CDUS_platform}, each M-100 USV is $0.46$m in length and width, which is equipped with a GPS antenna: DJI A2 GPS PRO, a control module: Manifold 1 and a WiFi module: TP-link TLBS520. 
More details of the M-100 UAV refer to \cite{zhang2021visual}.
The HUSTER-12c USV is $1.2$m in length and $0.42$m in width, and each HUSTER 16 USV is $1.6$m in length and $0.44$m in width, where both two kinds of USVs are equipped with two GPS antennas: CA-6152A, a control module: STM32F407, and the same WiFi module: TP-link TLBS520. Fig.~\ref{CDUS_platform} illustrates the coordination procedure of the cross-domain heterogeneous platform, where the UAVs and USVs localize their positions via GPS modules and interact with their neighbors via a WiFi $5$G network to generate the desired guidance signals for the coordinated navigation operations. Note that all states including trajectories, errors, etc., are transmitted and recorded by the Base station via WiFi $5$G networks.

\subsection{Experimental Results}
\label{sub_sec_exp}
Using the self-established cross-domain platform, we conduct a coordinated navigation task maneuvering along prescribed circular paths. More precisely, the prescribed circular paths for the USVs are the same and set to be $\sigma_{i,x}=-40+10\cos(\omega_i), \sigma_{i,y}=50+10\sin(\omega_i), i\in\mathcal V_1$, whereas the corresponding prescribed circular paths for UAVs are $\sigma_{j,x}=-40+10\cos(\omega_j), \sigma_{j,y}=50+10\sin(\omega_j), \sigma_{j,z}=20, j\in\mathcal V_2$. The control gains in controller \eqref{control_law_USV} and \eqref{control_law_UAV} are set to be $k_{i,x} =k_{i,y}=1.5, c_i=2, i\in\mathcal V_1$, and $k_{j,x} =k_{j,y}=k_{j,z}=1.5, c_j=2, j\in\mathcal V_2$. Moreover, we conduct three steps to prevent high proximity among the vehicles. Firstly, the initial positions of the USVs and UAVs are distrbuted at a safe distance to avoid high proximity from the beginning. Secondly, by properly setting the desired parametric displacements to be $\Delta_{i,i+1}=2\pi/3$ for $ i=1,2,3,4,5,$ ($i+1~\mbox{mod}~6$, if $i=6$), we can avoid high proximity once if the coordinated navigation of the CDUS is achieved. Thirdly, we embed the aforementioned collision avoidance algorithms of CBFs in Remark~\ref{re_proximity} to avoid high proximity during the coordinated navigation process.
Under Assumption~\ref{assu_topology}, the communication topology $\mathcal G (\mathcal V, \mathcal E)$ for the six vehicles is set to be a cyclic connected graph for convenience.

\begin{figure}[!htb]
\centering
\includegraphics[width=8cm]{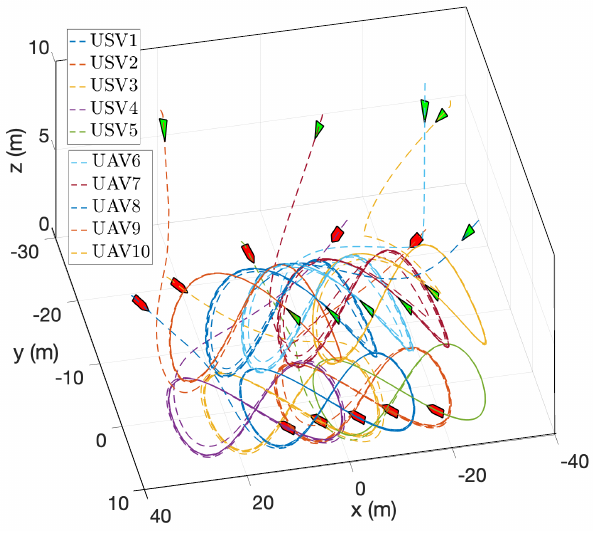}
\caption{Trajectories of the CDUS consisting of 5 USVs and 5 UAVs from random initial
positions to the coordinated navigation moving along the prescribed self-intersecting paths in 3D. (Here, the vessel shapes denote the USVs, and the triangles the UAVs. The dashed lines are the trajectories.)}
\label{self_intersecting_trajectory}
\end{figure}

\begin{figure}[!htb]
\centering
\includegraphics[width=\hsize]{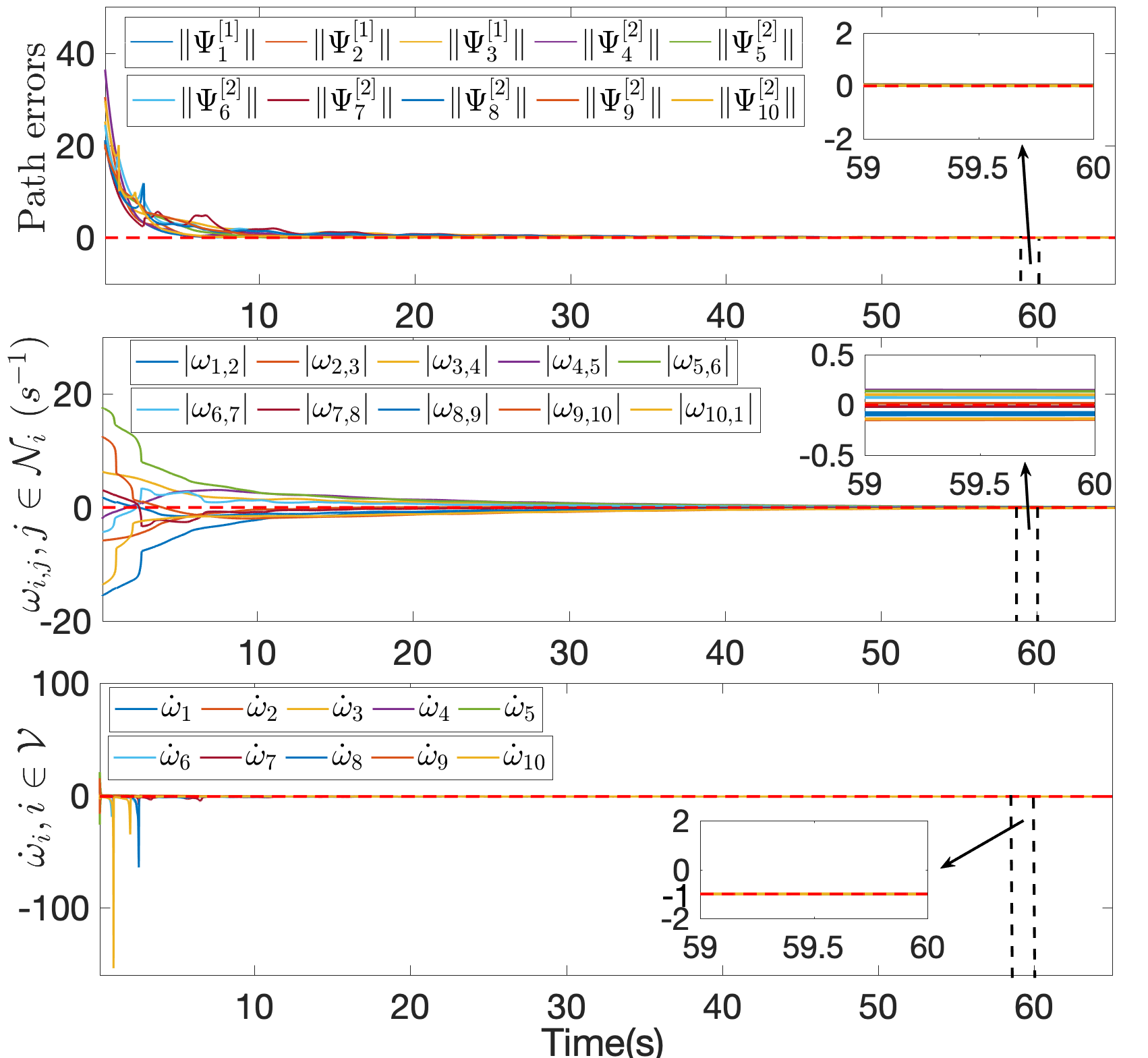}
\caption{Temporal evolution of the path-following errors $\|\Phi_i^{[1]}\|, i\in\mathcal V_1, \|\Phi_j^{[2]}\|, j\in\mathcal V_2$, the inter-vehicle parametric parameters $|\omega_{i,i+1}|, i\in \mathcal V_1 \cup \mathcal V_2$ ($i+1=1$ if $i=10$), and the derivative of the virtual coordinates $\omega_i, i\in\mathcal V_1\cup\mathcal V_2$.}
\label{self_intersecting_states}
\end{figure}

Fig.~\ref{experiment_trajectory} illustrates the trajectories of three USVs and three UAVs from random initial positions to the final coordinated circular navigation formation.
As shown in Fig.~\ref{experiments_snapshot}, the experimental snapshots validate that the coordinated circular navigation of the CDUS is finally achieved as well, where the USVs and UAVs converge to the desired parametric displacements from a top view of the camera drone. More precisely, it is observed in Fig.~\ref{experiment_errors} that the path-following errors converge to be around zeros at $t=87$ seconds, i.e., $\lim_{t\rightarrow\infty}\|\Phi_i^{[1]}(t)\|=0, i=1,2,3, \lim_{t\rightarrow\infty}\|\Phi_j^{[2]}\|=0, j=4,5,6$, which fulfills Objective 1 of the path convergence in Definition~\ref{CPF_definition}. The inter-vehicle parametric parameters converge to be around $2\pi/3$ at $t=142$ seconds, i.e., $\lim_{t\rightarrow\infty} \omega_{i,j}(t)=\Delta_{i,j}$, which indicates the heterogeneous coordination in Definition~\ref{CPF_definition}. Fig.~\ref{experiment_errors} also exhibits that the derivative of the virtual coordinates $\omega_i, i\in\mathcal V$ show small oscillations around $-1$, which can be accepted in practice and thus verify the path maneuvering along the prescribed circular paths. Accordingly, the effectiveness of the desired guidance velocities in Eqs.~\eqref{control_law_USV},~\eqref{control_law_UAV} and Theorem~\ref{theorem_CDUS} in practice is thus verified.

\subsection{Cross-Domain Simulations for Self-Intersecting Paths}
To verify the DGVF controller \eqref{control_law_USV} and \eqref{control_law_UAV} in more complex situations, we consider a CDUS consisting of $5$ USVs (i.e., $\mathcal V_1=\{1,2,3,4,5\}$) and $5$ UAVs (i.e., $\mathcal V_2=\{6,7,8,9,10\}$) which performs  the coordinated navigation whereas maneuvering along prescribed self-intersecting paths. More precisely, the self-intersecting paths for the USVs are set to be $\sigma_{i,x}=16\cos(0.5\omega_i), \sigma_{i,y}=6\cos(\omega_i+\pi/2)-d_{o}^i, i\in\mathcal V_1$ with the offsets $d_{o}^1=0, d_{o}^2=-7, d_{o}^3=7, d_{o}^4=14, d_{o}^5=-14$, whereas the prescribed paths for UAVs are set to be $\sigma_{j,x}=16\cos(0.5\omega_j), \sigma_{j,y}=6\cos(\omega_j+\pi/2)-d_{\it o}^j,  \sigma_{j,z}=-2\cos(\omega_j), j\in\mathcal V_2,$ with the offsets $d_{o}^6=0, d_{o}^7=-7, d_{o}^8=7, d_{o}^9=14, d_{o}^{10}=-14$. And the desired parametric displacements are specified to be $\Delta_{i,i+1}=0, i\in \mathcal V_1 \cup \mathcal V_2$ ($i+1~\mbox{mod}~10$, if $i=10$). The communication topology satisfying Assumption~\ref{assu_topology} is set to be the same as the one in Section~\ref{sub_sec_exp}.

As shown in Fig.~\ref{self_intersecting_trajectory}, five USVs (vessel shapes) and five UAVs (triangle shapes) from random initial positions finally converge to and maneuver along their prescribed self-intersecting paths whereas coordinating a formation. We also illustrate the detailed state evolution of the cross-domain navigation process. It is observed in Fig.~\ref{self_intersecting_states} that the path-following errors converge to be around zeros at $t=30$ seconds, i.e., $\lim_{t\rightarrow\infty}\|\Phi_i^{[1]}(t)\|=0, i\in\mathcal V_1, \lim_{t\rightarrow\infty}\|\Phi_j^{[2]}\|=0, j\in\mathcal V_2$, which fulfills Objective 1 in Definition~\ref{CPF_definition}. As for the evolution of inter-vehicle parametric parameters, Fig.~\ref{self_intersecting_states} illustrates that $|\omega_{i,i+1}|, i\in \mathcal V_1 \cup \mathcal V_2$ ($i+1=1$ if $i=10$), converge to be around zeros at $t=50$ seconds, i.e., $\lim_{t\rightarrow\infty} \omega_{i,j}(t)=\Delta_{i,j}$, which demonstrates the heterogeneous coordination in Objective 2 of Definition~\ref{CPF_definition}. Moreover, the derivative of the virtual coordinates $\omega_i, i\in\mathcal V_1\cup\mathcal V_2$ converge to be $-1$ as well, which indicates that path maneuvering in Objective 3 of Definition~\ref{CPF_definition}. Therefore, the feasibility of Theorem~\ref{theorem_CDUS} and the effectiveness of the present DGVF controller \eqref{control_law_USV} and \eqref{control_law_UAV} are both verified.

\section{Conclusion}
In this paper, we have presented a DGVF controller for heterogeneous CDUSs such that the aerial-marine UAV-USV group achieves coordinated navigation and maneuvers along their prescribed paths. The proposed DGVF controller consists of two items in a hierarchical control architecture, i.e., an upper-level heterogeneous guidance velocity controller to govern all vehicles to converge to and maneuver along the prescribed paths and coordinate their formation, and a lower-level signal tracking regulator  to track the corresponding desired guidance signals. The communication, sensor, and computational costs in complex heterogeneous coordination among UAVs and USVs can be reduced by only exchanging virtual coordinate scalars with neighbor vehicles. Significantly, sufficient conditions assuring the asymptotical convergence of the closed-loop system have been provided as well. Finally, the effectiveness of the proposed method has been verified by both numerical simulations and real-lake-based experiments with three M-100 UAVs, two HUSTER-16 USVs, a HUSTER-12C USV, and a WiFi 5G wireless communication station. Future work will focus on coordinated navigation control for the CDUS with strongly heterogeneous dynamics and in obstacle-dense environments. Additionally, we aim to utilize this kind of coordination technique to develop a cross-domain coordination coverage controller in various real-world applications.

\section{Acknowledgments}
The authors would like to thank Yifeng Tang and Zhihui Huang from Huazhong University of Science and Technology for their help in the setup of real-lake experiments.

\bibliographystyle{IEEEtran}
\bibliography{IEEEabrv,ref}

\end{document}